\newtheorem{theorem}{Theorem}
\newtheorem{definition}{Definition}
\newtheorem{lemma}{Lemma}
\journal{Elsevier}
\begin{document}

\begin{frontmatter}



\title{Unifying Invariant and Variant Features for Graph Out-of-Distribution via Probability of Necessity and Sufficiency}


\author[Address1]{Xuexin Chen}
\ead{im.chenxuexin@gmail.com}

\author[Address1,Address2]{Ruichu Cai\corref{cor1}}
\ead{cairuichu@gmail.com}
\cortext[cor1]{Corresponding author}

\author[Address1]{Kaitao Zheng}
\ead{1037452735@qq.com}

\author[Address1]{Zhifan Jiang}
\ead{468788700@qq.com}

\author[Address1]{Zhengting Huang}
\ead{zhengtinghuang68@gmail.com}

\author[Address3]{Zhifeng Hao}
\ead{haozhifeng@stu.edu.cn}

\author[Address4]{Zijian Li}
\ead{leizigin@gmail.com}

\address[Address1]{School of Computer Science, Guangdong University of Technology, Guangzhou 510006, China}
\address[Address2]{Peng Cheng Laboratory, Shenzhen 518066, China}
\address[Address3]{Mohamed bin Zayed University of Artificial Intelligence, Masdar City, Abu Dhabi}
\address[Address4]{College of Science, Shantou University, Shantou 515063, China}


\begin{abstract}
Graph Out-of-Distribution (OOD), requiring that models trained on biased data generalize to the unseen test data, has considerable real-world applications. 
One of the most mainstream methods is to extract the invariant subgraph by aligning the original and augmented data with the help of environment augmentation. 
However, these solutions might lead to the loss or redundancy of semantic subgraphs and result in suboptimal generalization. 
To address this challenge, we propose exploiting Probability of Necessity and Sufficiency (PNS) to extract sufficient and necessary invariant substructures. Beyond that,  we further leverage the domain variant subgraphs related to the labels to boost the generalization performance in an ensemble manner. 
Specifically, we first consider the data generation process for graph data. Under mild conditions, we show that the sufficient and necessary invariant subgraph can be extracted by minimizing an upper bound, built on the theoretical advance of the probability of necessity and sufficiency. 
To further bridge the theory and algorithm, we devise the model called  Sufficiency and Necessity Inspired Graph Learning (SNIGL), which ensembles an invariant subgraph classifier on top of latent sufficient and necessary invariant subgraphs,  and a domain variant subgraph classifier specific to the test domain for generalization enhancement. 
Experimental results demonstrate that our SNIGL model outperforms the state-of-the-art techniques on six public benchmarks, highlighting its effectiveness in real-world scenarios.
\end{abstract}



\begin{keyword} 
Graph Out-of-Distribution \sep Probability of Necessity and Sufficiency \sep Domain Generalization



\end{keyword}

\end{frontmatter}



\section{Introduction}
\begin{figure*}[t!]
	\centering
	\includegraphics[width=\columnwidth]{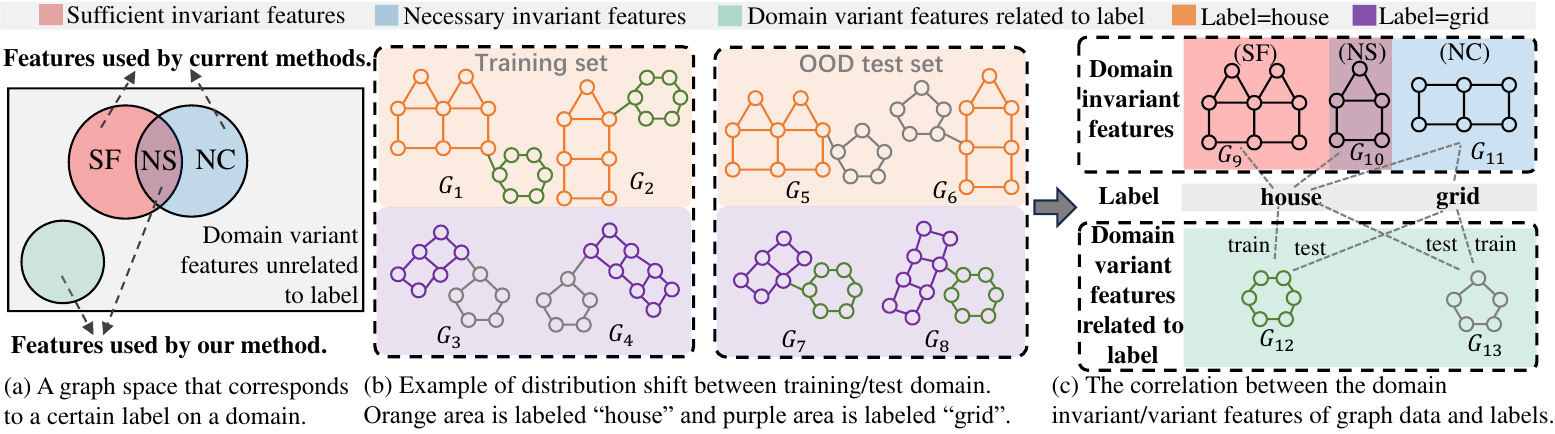} 
	\caption{Illustration of graph OOD methods with invariant subgraph learning, 
 (a) Existing methods exploit either sufficient (SF) or necessary (NC) invariant features due to the unclear trade-off between invariant graph feature space constraint and prediction loss. Our method exploits necessary and sufficient (NS) invariant features that achieve the optimal trade-off, and ensembles domain variant features to further improve generalization.
 (b) To illustrate the concepts of these features, we provide a toy example of a dataset with distribution shifts for graph classification.
 (c) The first line is all SF or NC invariant features for the ``house'' label, where NC features may lead to two labels and SF features may not be included in each ``house'' graph. In the third line, the correlation between variant features and labels will change with the domain where these features are located.}
 \label{fig:motivation}
\end{figure*}

Graph representation learning with \textbf{G}raph \textbf{N}eural \textbf{N}etworks (GNNs) has gained remarkable success in complicated problems such as intelligent transportation and the inverse design for \emph{polymers}~\cite{DBLP:journals/tits/RahmaniBBP23,CHEN2021100595}. 
Despite their considerable success, GNNs generally assume that the testing and training graph data are \textbf{i}ndependently sampled from the \textbf{i}dentical \textbf{d}istribution (IID). However, the validity of this assumption is often difficult to guarantee in real-world scenarios. 

To solve the Out Of Distribution (OOD) challenge of graph data, one of the most popular methods \cite{DBLP:journals/tkde/LiWZZ23,li2022graphde,zhao2020uncertainty,liu2023flood,sui2022causal} is to extract domain invariant  features of graph data for domain generalization (DG). Previously, Li et al. \cite{DBLP:journals/tkde/LiWZZ23} address
the OOD challenge by eliminating the statistical dependence between relevant and irrelevant graph representations. Since the spurious correlations lead to the poor generalization of GNNs, Fan et.al \cite{fan2023generalizing} leverage stable learning to extract the invariant components. Recently, several researchers have considered environment augmentation to extract invariant representations. Liu et.al \cite{liu2022graph} employ a  rationale-environment separation approach to address the graph-OOD challenge. Chen et.al \cite{chen2024does} further utilize environment augmentation to enhance the extraction of invariant features. Li \cite{DBLP:journals/corr/abs-2311-04837} employs data augmentation techniques to provide identification guarantees for the invariant latent variables.
In summary, these methods aim to achieve the invariant representation by 
balancing two objectives: 1) aligning the original and invariant feature spaces, and 2) minimizing the prediction error on the training data. 

Although existing methods with environmental augmentation have achieved outstanding performance in graph OOD, they can hardly extract optimal invariant subgraphs due to the difficulty of the trade-off between invariant alignment and prediction accuracy. 
To better understand this phenomenon, we provide a toy example of graph classification, where the ``house'' and ``grid'' labels are determined by the house-like and grid-like shapes respectively. 
Existing methods that balance the feature alignment restriction and the classification loss might result in two extreme cases. 
The first case is that GNNs put more weight on optimizing the classification loss. In this case, GNNs tend to extract latent sufficient invariant subgraphs. For example, $G_9$ and $G_{10}$ in Figure~\ref{fig:motivation}(c) are sufficient subgraphs of the ``house'' label, because these subgraphs with house-like shapes will only lead to the ``house'' label. However, $G_9$ is not an optimal invariant subgraph of ``house'', because it is not necessary, that is, other ``house'' samples may not contain $G_9$, such as $G_2$ in Figure~\ref{fig:motivation}(2), could result in classification errors.
The second case is that the alignment restriction is over-heavily strengthened. In this case, GNNs tend to extract latent necessary invariant subgraphs, that is, subgraph structures shared by most samples of the same class. For example, $G_{10}$ and $G_{11}$, as shown in Figure~\ref{fig:motivation}(c), are both necessary invariant subgraphs of the ``house'' label, because all graphs of this class contain $G_{10}$ and $G_{11}$. However, $G_{11}$ is not an optimal invariant subgraph because it is not sufficient, that is, $G_{11}$ may cause the model to be incorrectly classified into the ``grid'' label, because ``grid'' samples (such as $G_3$ and $G_8$) also contain subgraph $G_{11}$. 
Thus, it is vital to develop graph learning methods for achieving the optimal trade-off between prediction accuracy and invariant subspace constraints.


Based on the examples above, an intuitive solution to the graph OOD problem is to  extract the \emph{sufficient and necessary} invariant subgraphs for prediction. As shown in Figure~\ref{fig:motivation}(c), the subgraph $G_{10}$ is one of the optimal invariant subgraphs for the ``house'' label, because $G_{10}$ is shared by all the graphs of the ``house'' label (necessity), and $G_{10}$ can uniquely lead to the ``house'' label (sufficiency), which allows for the accurate prediction of ``house'' in any domain. However, in reality, not every label has necessary and sufficient invariant features, such as the ``grid'' label, since any of its necessary invariant subgraphs (e.g., $G_{11}$) may lead to the ``house'' label. 
To address this challenge, our key observation is that domain variant features are also helpful for predicting a certain domain if they are related to labels in that domain. Figure~\ref{fig:motivation} illustrates the ability of subgraph $G_{13}$ in predicting the ``house'' label in the test set. Under this intuition, we propose a Sufficiency and Necessity Inspired Graph Learning (SNIGL) method to exploit the domain variant subgraphs and sufficient and necessary invariant subgraphs for prediction. 
Specifically, to learn the necessary and sufficient invariant feature subspace, we resort to \emph{Probability of Necessity and Sufficiency} (PNS)~\cite{pearl2022probabilities} in causality, a notion for quantifying the probability that an event $A$ is a \emph{necessary and sufficient cause} of an event $B$. Based on PNS, we reduce this goal as a PNS optimization problem with respect to the invariant features. Since computing PNS is intractable since the counterfactual data is not available. We then propose a flexible PNS lower bound to solve this challenge under mild assumptions. In order to fill the gap of the lack of necessary and sufficient invariant features for some labels in the test domain, we propose a principled ensemble framework that combines invariant and variant features. Since the labels of the test domain are not available, we first train a biased domain variant feature classifier specific to the test domain through the pseudo-labeled data based on the training set and derive the calibration method under suitable assumptions. 
Finaly, our proposed SNIGL is validated on several mainstream simulated and real-world benchmarks for application evaluation. The impressive performance that outperforms state-of-the-art methods demonstrates the effectiveness of our method. 
\section{Preliminaries}
\subsection{Problem Setup}


\begin{wrapfigure}{l}{3.5cm} 
\includegraphics[width=0.2\textwidth]{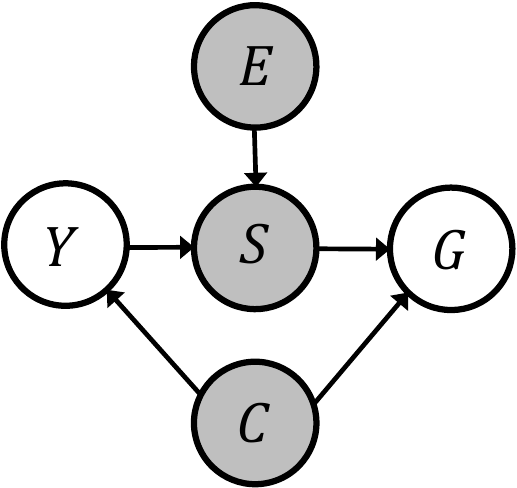}
\caption{The causal graph for domain generalization problem on Graph~\cite{DBLP:journals/corr/abs-1907-02893}. Grey and white nodes denote the latent and observed variables, respectively.}
\label{fig:causal_graph22}
\end{wrapfigure}
In this paper, we focus on domain generalization in graph classification where GNNs are trained on data from multiple training domains with the goal of performing well on data from unseen test domains. Formally, we consider datasets $\mathcal D = \{\mathcal{D}^e \}_{e \in \mathcal{E}^{\text{train}}}$ collected from $m$ different training domains or \emph{environments} $\mathcal{E}^{\text{train}} = \{1, ..., m\}$, with each dataset $\mathcal{D}^e=$ $\{(G^e_i, y^e_i)\}^{n_e}_{i=1}$ containing graph data pairs $(G^e_i, Y^e_i)$ sampled independently from an identical distribution $P(G, Y|E=e)$, where $G, E, Y$ denotes the variables of input graph, environment (i.e., domain indicator) and ground-truth label. $G_i = (\mathcal{V}_i, \mathcal{L}_i, X_i)$ denotes the $i$-th graph instance where  $\mathcal{V}_i$ is a set of nodes, $\mathcal{L}_i \subseteq \mathcal{V}_i \times \mathcal{V}_i$ is a set of edges and $X_i \in \mathbb R^{|\mathcal{V}_i| \times d^{\mathcal{V}_i}}$ is the node feature matrix. Each row of $X_i$ denote the $ d^{\mathcal{V}_i}$-dimensional feature vector for node $v \in \mathcal{V}_i$.
Let $\mathcal{G}$ and $\mathcal{Y}$ be the graph and label space. The goal of domain generalization on graphs is to learn an \emph{invariant GNN} $g_C \circ f_C$ that performs well on a larger set of possible domains $\mathcal{E}^{\text{all}} \supset \mathcal{E}^{\text{train}}$, where $f_C: \mathcal G \to \mathbb R^{d^c}$ is the encoder of invariant GNN that is used to extract domain invariant subgraph or representation $C = f_C(G)$ from each graph $G$ and $g_Y: \mathbb R^{d^c} \to \mathcal{Y}$ is the downstream classifier to predict the label $Y=g_C(C)$.

\textbf{Graph generation process.} 
Generating predictions that can generalize out of distribution requires understanding the actual mechanisms of the task of interest. Following previous works \cite{von2021self,chen2022invariance}, here we present a 
generation process of graph data behind the graph classification task, by inspecting the causalities among five variables: input graph $G$, ground-truth label $Y$, domain invariant subgraph $C$, domain variant subgraph $S$ and environment  $E$, where noises are omitted for simplicity.
Figure~\ref{fig:causal_graph22} illustrates the causal diagram, where each link denotes a causal relationship between two variables. $C \to G \leftarrow S$ indicates that the input graph $G$ consists of two disjoint components: the invariant subgraph $C$ and the variant or unstable subgraph $S$, according to whether they are affected by environment $E$, such as the orange and green components of $G_1$ in Figure X. 
Moreover, $C \to Y \to S$ indicates $C$ is \emph{partially informative} about $Y$, i.e., $(S, E) \not\perp Y \mid C$~\cite{chen2022invariance}. $C \to Y$ indicates the labeling process, which assigns labels $Y$ for the corresponding $G$ merely based on $C$. Taking the house classification example in Figure X again, $C$ of input graph $G_1$ can be the $G_9$, $G_{10}$ or $G_{11}$ (which one is better will be discussed in Section \ref{sec:pns}),  which perfectly explains why the graph is labeled as ``house''.

\subsection{Probability of Necessity and Sufficiency}\label{sec:pns}
Current OOD generalization methods on graph data mainly focus on learning only domain invariant features for label prediction. 
However, domain-invariant features can be divided into three categories, each of which has different effects on graph label prediction.

\textbf{1) Sufficient but unnecessary causes.} Knowing cause $A$ leads to effect $B$, but when observing effect $B$, it is hard to confirm $A$ is the actual cause. For example, the domain invariant feature $G_9$ can predict the label ``house'', but a graph with the label ``house'' label  might not contain this feature, such as $G_2$. 
\textbf{2) Necessary but insufficient causes.} Knowing effect $B$ we confirm the cause is $A$, but cause $A$ might not lead to effect $B$. For example, if the input graph does not contain the invariant feature $G_{11}$, then we can confirm that the label of this graph is not ``house''. However, graph $G_4$ with the ``grid'' label also has the same invariant feature $G_{11}$ as a ``house''. Thus invariant feature $G_{11}$ is not a stable feature to predict houses. 
\textbf{3)} \textbf{Necessary and sufficient causes.} Knowing effect $B$ we confirm the cause is $A$, and we also know that $A$ leads to $B$.  In the ``house'' and ``grid'' classification tasks, invariant feature  $G_{10}$ could be a necessary and sufficient cause. It is because $G_{10}$ allows humans to distinguish a ``house'' from a ``grid'', and when we know there is a ``house'', $G_{10}$ must exist. In conclusion, the accuracy of predictions based on domain-invariant features with sufficient necessary information will be higher than those based on other types of invariant features.

In order to learn sufficient and necessary domain invariant features of the input graph, we resort to the concept of \emph{Probability of Necessity and Sufficiency} (PNS) \cite{pearl2022probabilities}, which is defined as follows.
\begin{definition}\label{def:pns} (Probability of necessity and sufficiency (PNS) \cite{pearl2022probabilities}) Let the specific values of domain invariant variable $C$ and label $Y$ be $c$ and $y$. The probability that $C = c$ is the necessary and sufficiency cause of $Y = y$ is
\begin{equation}\label{equ:pns_cy}
\small
\begin{aligned}
\operatorname{PNS}(C\!=\!c, Y\!=\!y) &:=\underbrace{P(Y_{do(C=c)}\!=\!y \mid C \!\ne\! c, Y \!\ne\! y)}_{\text {sufficiency }} P(C \ne c, Y \neq y) \\
& +\underbrace{P(Y_{do(C \ne c)} \!\ne\! y \mid C\!=\!c, Y\!=\!y)}_{\text {necessity }} P(C\!=\!c, Y\!=\!y).
\end{aligned}
\end{equation}
\end{definition}
In the above definition, the notion $P(Y_{do(C = c)} = y|C\ne c, Y \ne y)$ means that we study the probability of $Y \ne y$ when we force the variable $C$ to be a value $do(C = c)$ (i.e., perform the do-calculus~\cite{pearl2000models}) given a certain factual observation $Y \ne y$ and $C \ne c$. The first and second terms in PNS correspond to the probabilities of sufficiency and necessity, respectively. Event $C$$=$$c$ has a high probability of being the sufficient and necessary cause of event $Y$$=$$y$ when the PNS value is large. 

\section{Theory: Unifying Invariant and Variant Features for Graph OOD via PNS}
In this section, motivated by the fact that not necessary or not sufficient invariant features may be harmful to domain generalization on the graph, we present our main theoretical result which shows how to unify invariant and variant subgraphs for graph OOD via PNS. 
We begin by describing how to extract necessary and sufficient invariant information by identifying the subspace of the variable $C$. 
Since not every graph contains this invariant subgraph, 
we describe how to alleviate this problem by reconstructing $P(Y|C,S)$ from $P(Y|C)$ and $P(C, S)$ which involves exploiting domain variant features to enhance domain generalization.
\subsection{Necessary and Sufficient Invariant Subspace Learning}
In this section, supposing that we have already identified the domain invariant subgraph $C$, we analyze the problem of extracting the necessary and sufficient invariant features about $C$ from $G$. 
We first reduce it to an optimization problem for PNS, i.e., identify the subspace of $C$ with the largest PNS with respect to $C$ and $Y$ given a  graph $G$. However, PNS is usually intractable because counterfactual data are not available. We show this issue can be solved exactly by deriving the lower bound of PNS for optimization.

We now formalize the two key assumptions underlying our approach. These assumptions below  will help us  derive the lower bound of PNS based on conditional probability.
\begin{definition}\label{def:exogeneity} (Exogeneity~\cite{pearl2000models})
Variable $C$ is exogenous relative to variable $Y$ if $C$ and $Y$ have no common ancestor in the graph generation process. 
\end{definition} 
\begin{definition}\label{def:consistency} (Consistency~\cite{pearl2000models,hernan2010causal})
    If variable $C$ is assigned the value $c$, then the observed outcome $Y$ is equivalent to its outcome $Y_{do(C = c)}$ of intervention; i.e., if $C = c$, then $Y = Y_{\text{do}(C=c)}$.
\end{definition}
We will discuss the roles of these assumptions after stating our main result. 

\textbf{Reduction to the optimization problem for PNS.} 
Suppose we have used the training data to learn the invariant GNN $g_C \circ f_C$ and thus know $P(C|G) \approx f_C(G)$ and $P(Y|C) \approx g_C(C)$, and recall that our goal is to predict $Y$ using the necessary and sufficient invariant features about $C$ from $G$. Thus, our task becomes to reconstruct $P(C|G)$, that is, to find a subspace in $C$ that contains all necessary and sufficient causes for the label $Y$ of $G$. 
By Definition~\ref{def:pns}, a trivial solution is to find the subspace of $C$ for a given $G$ that maximizes PNS. However, computing the intervention probability is a challenging problem since collecting the counterfactual data is difficult, or even impossible in real-world scenarios, it is not feasible to optimize PNS directly. Fortunately, motivated by probabilities of causation theory~\cite{pearl2022probabilities}, we show that the lower bound of PNS can be theoretically identified by the observation data under proper conditions.
\begin{theorem}
\label{thm:bound} (Lower bound of PNS).
Consider two random variables $C$ and $Y$. If exogeneity and consistency assumptions hold,  then the lower bound of $\text{PNS}(C=c,Y=y)$ is as follows:
\begin{equation}\label{equ:lower}
\small
    \max \Big(\frac{ P(Y = y|C =c) - P(Y=y)}{1 - \mathbb E_G [P(C=c|G)]}, 0\Big) \leq \text{PNS}(c,y)
\end{equation}
\end{theorem}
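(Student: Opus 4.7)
The plan is to derive the bound in three steps: first, use consistency to rewrite the sum of two conditional-probability terms in Definition~\ref{def:pns} as a single joint counterfactual probability; second, bound this joint by a Fr\'echet-type inequality in terms of two marginal interventional distributions; third, use exogeneity to convert those interventional marginals into observational conditionals, and finish with a short algebraic rearrangement to match the stated form. Concretely, for the first step I would consider the event $A := \{Y_{do(C=c)} = y\} \cap \{Y_{do(C \ne c)} \ne y\}$ and partition $P(A)$ according to the four possible factual configurations of $(C, Y)$ relative to $(c, y)$. The consistency assumption forces $Y_{do(C=c)} = Y$ whenever $C = c$ and $Y_{do(C \ne c)} = Y$ whenever $C \ne c$, which eliminates the cases $(C = c, Y \ne y)$ and $(C \ne c, Y = y)$ and leaves exactly the two surviving cross-world terms that comprise $\text{PNS}(c,y)$, so $\text{PNS}(c,y) = P(A)$.

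For the second and third steps, write $u_1 := P(Y_{do(C=c)} = y)$ and $u_0 := P(Y_{do(C \ne c)} = y)$. The Fr\'echet--Hoeffding lower bound on the intersection of two events gives
\begin{equation*}
P(A) \ge \max\bigl(0,\, u_1 + (1 - u_0) - 1\bigr) = \max(0,\, u_1 - u_0).
\end{equation*}
Under exogeneity, $C$ and $Y$ share no common ancestor, so the backdoor criterion is satisfied with an empty adjustment set, yielding $u_1 = P(Y = y \mid C = c)$ and $u_0 = P(Y = y \mid C \ne c)$. A single application of the law of total probability, combined with the identity $\mathbb{E}_G[P(C = c \mid G)] = P(C = c)$, rewrites $P(Y=y \mid C=c) - P(Y=y \mid C\ne c)$ as $\frac{P(Y=y \mid C=c) - P(Y=y)}{1 - \mathbb{E}_G[P(C=c \mid G)]}$, matching the stated lower bound.

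I expect the main obstacle to be the bookkeeping in the first step, namely using consistency carefully to collapse the four-way $(C,Y)$ partition into the two surviving cross-world terms of Pearl's definition. The Fr\'echet step is elementary once the joint counterfactual form is in hand, and the exogeneity-to-conditional passage is standard. One subtlety worth flagging is that the argument implicitly dichotomizes both $C$ and $Y$ via the events $\{C = c\}$ vs.\ $\{C \ne c\}$ and $\{Y = y\}$ vs.\ $\{Y \ne y\}$; the Fr\'echet lower bound depends on exactly this dichotomization.
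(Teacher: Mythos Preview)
Your proposal is correct and follows essentially the same route as the paper: the paper also rewrites $\text{PNS}(c,y)$ as the joint counterfactual probability $P(Y_{do(C=c)}=y,\,Y_{do(C\ne c)}\ne y)$ via consistency (using the tautology $(C=c)\vee(C\ne c)$ rather than an explicit four-case partition), applies the Fr\'echet/Bonferroni lower bound, identifies the interventional marginals with conditionals via exogeneity, and performs the same law-of-total-probability rearrangement to reach the stated form. The only difference is the order of presentation (the paper applies the Fr\'echet bound first and the consistency identity afterwards), which is immaterial.
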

\textbf{Proof Sketch of Theorem \ref{thm:bound}.} We begin by deriving the lower bound of the probability $P(A, B)$ for arbitrary events $A$ and $B$ based on  \emph{Bonferroni's inequality}. Then, by consistency assumption, we express $P(A, B)$ in the form of PNS (Definition \ref{def:pns}). Finally, by exogeneity assumption, we use conditional probability to identify intervention probability $P(Y_{do(C = c)} = y)$$=$$P(Y=y|C=c)$. This yields the lower bound of PNS as shown in Eq.~\ref{equ:lower} (full proof is provided in \ref{sec:proof_bound}). 

Theorem \ref{thm:bound} inspires us that when counterfactual data is unavailable, we can reconstruct $P(C|G)$ by maximizing the lower bound of PNS to extract the necessary and sufficient invariant feature subspace in $C$. The following section will present a reconstruction objective for $P(C|G)$ based on the lower bound of PNS.

\subsubsection{PNS Risk}
Based on the PNS lower bound (Eq.~\ref{equ:lower}), this section presents the PNS risk which shows how to estimate the subspace of variable $C$. 
The objective decreases when the subspace of variable $C$ contains less necessary and sufficient information. 
The objective is based on the lower bound of PNS (Eq.~\ref{equ:lower}). 
Formally, given the distribution of invariant features $P(C|G)$, we use the notation $P_{\theta^c}(C|G)$ to present the estimated reconstructed distributions which are parameterized by $\theta^c$, and the support of $P_{\theta^c}(C|G)$ denote the subspace of $C$. We can adapt $P_{\theta^c}(C|G)$ to minimize the following PNS risk with respect to $\theta_c$
on environment $e$:
\begin{equation}\label{equ:pns_objv1}
\small
\begin{aligned}
     &R^e_{\text{NS}}(P_{\theta^c}(C|G), P(Y|C))=\\
     &\mathbb E_{G', y \sim P(G, Y|E=e)} \mathbb E_{c\sim P_{\theta^c}(C|G')} \Big[\frac{ P(Y\!=\!y|E\!=\!e) - P(Y \!=\! y|C \!=\!c)}{1 - \mathbb E_G [P_{\theta^c}(C\!=\!c|G)|E\!=\!e]}\Big],
\end{aligned}
\end{equation}
where $\mathbb E_{c \sim P_{\theta^c}(C|G)} [ \frac{P(Y = y|C =c) - P(Y=y|E=e)}{1 - \mathbb E_G [P_{\theta^c}(C=c|G)|E=e]})]$ represents the expectation of the lower bound of PNS (Eq.~\ref{equ:lower}) over $P_{\theta^c}(C|G)$ given a graph $G$. 
Note that the identification of the subspace of $C$ of $G$ containing only necessary and sufficient features can  be confirmed when above expectation of the lower bound equal to $1$. Otherwise, the learned subspace may contain not sufficient or not necessary information. 

In general, although Eq.~\ref{equ:pns_objv1} is capable of learning subspace for each input graph $G$ with values of PNS as large as possible, for some graphs $G$, their PNS may be less than 1. One main reason is that some graphs may not have sufficient and necessary invariant features, such as $G_3$ and $G_4$ in Figure~\ref{fig:motivation}. 
This usually leads to significant performance drops or even complete failure. 
To mitigate the negative impact of insufficient or unnecessary invariant features, 
the following section demonstrates that, we can also use domain variant subgraphs $S$ that are related to the label $Y$.

\subsection{Ensemble Learning with Variant Features}
In this section, to mitigate the negative impact of not sufficient or not necessary invariant subgraphs on prediction, 
our key observation is that if the domain variant subgraph $S$ is related to the label $Y$, then we can improve prediction accuracy by ensembling predictions based on unstable or domain variant subgraphs specific to the test domain $e$. Based on this observation, 
we describe a boosted joint GNN in the test domain $e$ as a combination of an invariant GNN $g_C \circ f_C$ 
and an \emph{unstable GNN} $g_{S,e} \circ f_S$ training on the domain variant subgraphs on test domain $e$. 
The unstable GNN $g_{S,e} \circ f_S$ is composed of a domain variant subgraph extractor $f_S: \mathcal{G} \to \mathbb (0, 1)^{n \times n}$ across domains, and a domain-specific classifier $g_{S,e}$ on test domain $e$, where $n$ is the number of nodes of $G$. Then, the boosted joint GNN is denoted by
\begin{equation}\label{equ:boost}
\small
     Y = \text{COMBINE}((g_C (f_C(G)), g_{S,e}(f_S(G))).
\end{equation}
The components $g_C \circ f_C$ and $f_S(G)$ of Eq.~\ref{equ:boost} can be obtained through  our PNS risk (Eq.~\ref{equ:pns_objv1}) and conditional independence test penalty, which will be introduced in Section \ref{sec:train_domain}. However, it is non-trivial to derive a principled choice for the $\text{COMBINE}$ function and learn the classifier $g_{S,e}$ specific to the test domain $e$ because we don’t have labels in the test domain $e$. 
Building upon the work of Cian Eastwood et al.~\cite{DBLP:conf/nips/Eastwood0NPKS23}, we  adopt 
\emph{pseudo-labels} $\hat Y_i = \arg\max_{j} g_C(f_C(G_i))$
denoting the $j$-th entry of the prediction for $G_i$. 
In other words, we need to analyze the problem of using these pseudo-labels to learn $g_{S,e}$ in the test domain $e$ without true labels. 
Cian Eastwood et al. reduce this to a ``marginal problem'' in probability theory, i.e., the problem of identifying a joint distribution based on information about its marginals. 

\textbf{Reduction to the marginal problem.} Suppose we have enough unlabeled data from test domain $e$ to learn $P(C, S|E=e)$, our goal is to predict $Y$ from $(C, S)$ in test domain $e$. Our key observation is that if we can decompose $P(Y|C,  S, E)$
into two terms,  $P(Y| C)$ and $P(Y|S, E)$, then we can utilize  $P(Y|C, S)$ to achieve optimal prediction of $Y$ from $(C, S, E)$. Thus, our task is  broken down into the reconstruction of $P(Y| C,  S)$ from $P(Y| C)$ and $P(Y|S, E)$. With the help of a marginal problem theory~\cite{DBLP:conf/nips/Eastwood0NPKS23}, the theorem below demonstrates that, under our assumption of causal relationships between variables in Figure~\ref{fig:causal_graph22}, we can exactly recover these terms. To simplify notation, we assume the label $Y$ is binary and leave the multi-class extension for  \ref{sec:multi_class}.
\begin{theorem}\label{lma:boost}
Consider variables $C$, $S$, $Y$, and $E$, where $Y$ is binary ($Y\!\in\!\{0, 1\}$). If $S \perp C|Y$ and $C \not\perp Y$, then the distribution $P(Y\!=\!1|C,S,E)$ can be decomposed into three components: $P(Y| C)$, $P(Y| S, E)$, and $P(Y\!=\!1)$. Specifically, if  $\hat Y \!\sim\! \text{Bernoulli}(P(Y\!=\!1|C))$ is a pseudo-label, then we have 
\begin{equation}\label{equ:calibrate}
\small
\begin{aligned}
    &P(Y\!=\!1| C,  S, E) \!=\! \sigma(\text{logit}(P(Y\!=\!1|  C)) \!+\! \text{logit}(P(Y\!=\!1|  S,E)) \!+\! \text{logit}(P(Y\!=\!1)))\\
    &P(Y=1| S, E) = \frac{P(\hat Y =1| S, E) + P(\hat Y = 0|Y=0) - 1}{P(\hat Y = 0|Y=0) + P(\hat Y = 1|Y=1) - 1},\\
    &P(\hat{Y}\!=\!1|Y\!=\!1) \!=\! \frac{\mathbb E_{C} [ P(Y\!=\!1| C)^2]}{\mathbb E_{C} [P(Y\!=\!1 | C)]}, \quad    P(\hat{Y}\!=\!0|Y\!=\!0) \!=\! \frac{\mathbb E_{C} [ P(Y\!=\!0| C)^2]}{\mathbb E_{C} [P(Y\!=\!0 | C)]}
\end{aligned}
\end{equation}
\end{theorem}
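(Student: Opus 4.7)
The plan is to split the proof into two modular pieces. First, I derive the logit-decomposition of $P(Y=1\mid C,S,E)$ from Bayes' rule together with the conditional independences implied by the causal graph in Figure~\ref{fig:causal_graph22}. Second, I perform a noisy-label calibration that recovers $P(Y=1\mid S,E)$ from the pseudo-label conditional $P(\hat Y=1\mid S,E)$ and the noise rates $P(\hat Y\mid Y)$, the latter of which are identifiable from training statistics alone.

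For the first part, I would read off from the DAG three d-separations beyond the stated $S\perp C\mid Y$: (i) $C\perp E$, since every path from $C$ to $E$ goes through the collider $G$ or through $S$; (ii) $C\perp E\mid Y$, since conditioning on $Y$ additionally blocks the chain $E\to S\leftarrow Y\leftarrow C$ at the non-collider $Y$; and (iii) $Y\perp E$, again by the unconditioned collider at $S$ together with (i). Combined with $S\perp C\mid Y$ these yield the factorization $P(C,S,E\mid Y)=P(C\mid Y)\,P(S\mid Y,E)\,P(E)$. Taking the $Y$-log-odds of the Bayes expression for $P(Y=1\mid C,S,E)$ makes the $P(E)$ factor cancel, and each of the two surviving likelihood ratios, $P(C\mid Y{=}1)/P(C\mid Y{=}0)$ and $P(S\mid Y{=}1,E)/P(S\mid Y{=}0,E)$, is rewritten via Bayes as $\text{logit}(P(Y{=}1\mid C))-\text{logit}(P(Y{=}1))$ and $\text{logit}(P(Y{=}1\mid S,E))-\text{logit}(P(Y{=}1\mid E))$ respectively, where $\text{logit}(P(Y{=}1\mid E))=\text{logit}(P(Y{=}1))$ by~(iii). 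Collecting the logit terms gives the first identity of Eq.~\ref{equ:calibrate}.

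For the pseudo-label calibration, note that sampling $\hat Y\sim\text{Bernoulli}(P(Y{=}1\mid C))$ makes $\hat Y$ a function of $C$ and a fresh coin flip, so $\hat Y\perp (S,E,Y)\mid C$. Combined with $C\perp (S,E)\mid Y$ (from (i) and the stated $S\perp C\mid Y$), this gives $\hat Y\perp (S,E)\mid Y$. The law of total probability then yields $P(\hat Y{=}1\mid S,E)=\alpha\,P(Y{=}1\mid S,E)+(1-\beta)\,P(Y{=}0\mid S,E)$ with $\alpha=P(\hat Y{=}1\mid Y{=}1)$ and $\beta=P(\hat Y{=}0\mid Y{=}0)$; solving this linear equation for $P(Y{=}1\mid S,E)$ and using $1-(1-\beta)=\beta$ in the denominator reproduces the second identity. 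Finally, to identify the noise rates, I expand $P(\hat Y{=}1\mid Y{=}1)=P(\hat Y{=}1,Y{=}1)/P(Y{=}1)$; the independence $\hat Y\perp Y\mid C$ together with $P(\hat Y{=}1\mid C)=P(Y{=}1\mid C)$ factors the joint as $P(\hat Y{=}1,Y{=}1\mid C)=P(Y{=}1\mid C)^2$, and marginalizing over $C$ yields the third identity; the $Y{=}0$ case is symmetric.

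The main obstacle I anticipate is the d-separation bookkeeping: the causal graph mixes the observed variable $G$ with the latent subgraphs $C$ and $S$, so I would enumerate every undirected path between the relevant variable pairs and verify that each is blocked either by the unobserved collider $G$ or by the conditioning set before invoking the global Markov property. A secondary subtlety is distributing the statistics across domains correctly: the noise rates $P(\hat Y\mid Y)$ depend only on the invariant quantity $P(Y\mid C)$ and hence remain stable across environments, whereas $P(\hat Y{=}1\mid S,E{=}e)$ must be estimated from the unlabeled test-domain sample in $e$, so that the resulting ensemble in Eq.~\ref{equ:boost} is test-domain specific yet trainable without test labels.
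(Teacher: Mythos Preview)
Your approach mirrors the paper's almost step for step: Bayes' rule plus $C\perp(S,E)\mid Y$ for the logit decomposition, the law of total probability plus $\hat Y\perp(S,E)\mid Y$ for the calibration, and $\hat Y\perp Y\mid C$ together with $P(\hat Y{=}1\mid C)=P(Y{=}1\mid C)$ for the noise rates. Your d-separation bookkeeping is more explicit than the paper's, which simply asserts the needed independences, and your factorization $P(C,S,E\mid Y)=P(C\mid Y)\,P(S\mid Y,E)\,P(E)$ differs only cosmetically from the paper's $P(C\mid Y)\,P(S,E\mid Y)$; both collapse to the same logit identity.

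The one genuine omission is that you never invoke the hypothesis $C\not\perp Y$. It is needed precisely to guarantee that the denominator $\alpha+\beta-1=P(\hat Y{=}1\mid Y{=}1)+P(\hat Y{=}0\mid Y{=}0)-1$ in the calibration step is nonzero, so that ``solving this linear equation'' is actually possible. The paper isolates this as a separate lemma (Lemma~\ref{lma:nonzero}), proving that $\alpha+\beta=1$ holds if and only if $C\perp Y$; you should add the analogous argument. As a side remark, if you carry out your logit collection carefully you will find the prior term enters with sign $-\operatorname{logit}(P(Y{=}1))$ rather than the $+$ appearing in Eq.~\ref{equ:calibrate}; the paper's own appendix derivation contains the same slip, so this is a typo in the statement rather than a defect in your method.
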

Intuitively, the second row of Eq.~\ref{equ:calibrate} shows how to calibrate the distribution of pseudo labels $P(\hat{Y}|S,E)$ to the distribution of true labels $P(Y|S,E)$, in other words, it provides a way for us to obtain a domain-variant subgraph classifier $g_{S,e}$ specific to the test domain $e$ even without true labels. The first row of Eq.~\ref{equ:calibrate} shows how the predictions of invariant subgraphs $C$ and variant subgraphs $S$ are ensembled, justifying the combination function
\begin{equation}\label{equ:combine_prob}
\small
\begin{aligned}
    &\text{COMBINE}(P(Y=1|C), P(Y=1|S, E)) =\\
    &\sigma(\text{logit}(P(Y=1|  C)) + \text{logit}(P(Y=1|  S,E)) + \text{logit}(P(Y=1)))
\end{aligned}
\end{equation}
Note that according to Theorem~\ref{lma:boost}, if $S$ and $Y$ are independent, then integrating domain variant features into the model will not actually help improve predictions.

\textbf{Proof Sketch of Theorem \ref{lma:boost}.} 
Our proof consists of two parts. The first part is to calibrate $P(\hat Y=1|S)$. We first use the law of total probability to insert the variable $Y$ to $P(\hat Y=1|S)$, i.e., 
\begin{equation*}
\small
    P(\hat Y\!=\!1|S) \!=\! P(Y\!=\!1|S) \cdot P(\hat Y \!=\!1| S,Y\!=\!1) + P(Y\!=\!0|S) \cdot P(\hat Y \!=\!1|S,Y\!=\!0)
\end{equation*}
and according to our assumption of causal relationships between variables in Figure~\ref{fig:causal_graph22}, we have $S \perp C | Y$, and since $\hat Y$ is determined by $C$, we can eliminate $S$ from $P(\hat Y|S,Y)$, as follows.
\begin{equation*}
\small
    P(\hat Y=1|S) = P(Y=1|S) \cdot P(\hat Y =1|Y=1) + P(Y=0|S) \cdot P(\hat Y =1|Y=0)
\end{equation*}
Hence, re-arranging the above equality will give us the conditional distribution $P(Y=1|S)$ (Eq.~\ref{equ:calibrate}). 
The second part of the proof is to decompose $P(Y|C,S,E)$ into two terms, $P(Y=1|S,E)$ and $P(Y=1|C)$. We first calculate the \emph{Odds} of probability $P(Y|C,S,E)$, that is,
\begin{equation}\label{equ:sdfafg}
\small
    \frac{P(Y=1|C, S, E)}{P(Y=0|C, S, E)}  = \frac{P(Y=1)P(C, S, E|Y=1)}{P(Y=0)P(C, S, E|Y=0)}.
\end{equation}
Since $C \perp  \{S, E\}| Y$, we have $P(C, S,E|Y) = P(C|Y)P(S,E|Y)$. Substituting this equality into Eq.~\ref{equ:sdfafg}, we get the logit of $P(Y=1|C,S,E)$ as follows.
\begin{equation}
\small
\begin{aligned}
    &\text{logit}(P(Y=1|C, S, E))  =\\
    &\text{logit}(P(Y=1)) + \text{logit}(P(Y=1|S,E)) + \text{logit}(P(Y=1|C))
\end{aligned}
\end{equation}
Since the sigmoid $\sigma$ is the inverse of logit, we can write the distribution $P(Y=1|C, S,E)$  in terms of the conditional distributions $P(Y=1|S,E)$ and $P(Y=1|C)$ and the marginal $P(Y=1)$.
Full proof is provided in \ref{sec:proof_boost}.

\textbf{Remark 1.} Eq.~\ref{equ:boost} utilizes almost all the information in the input graph $G$ through separate modeling on the training domain and the test domain $e$, because we need to extract sufficient and necessary domain invariant features from the training domain 
and domain varying features specific to the test domain $e$ for prediction. If we do not model separately, i.e., only train a single model, it is difficult for this model to extract domain-specific features from the training domain that are specific to the test domain $e$.

\section{Algorithm: Sufficiency and Necessity Inspired Graph Learning}\label{sec:alg}
In this section, we will leverage the above theoretical results and propose a data-driven method called Sufficiency and Necessity Inspired Graph Learning (SNIGL) to employ necessary and sufficient invariant subgraphs and domain variant subgraphs specific to the test domain $e$ for domain generalization. 
On the training domains, we describe learning an invariant GNN that extracts necessary and sufficient invariant features and an unstable GNN that extracts domain variant features.
On the test domain, we then describe how to combine these features to enhance the performance of domain generalization. 
\subsection{Training domains: Learning necessary and sufficient invariant and variant subgraphs.}\label{sec:train_domain}
Our goal in the training domains is to learn three modules $f_{\theta^c}$, $g_{\phi^c}$ and $f_{\theta^s}$ parameterized by $\theta^c$, $\phi^c$ and  $\theta^s$, respectively. 
The first two modules are the estimated necessary and sufficient invariant subgraphs extractor $f_{\theta^c}$ and its downstream classifier $g_{\phi^c}$, which form the invariant GNN $g_{\phi^c} \circ f_{\theta^c}$. The third module is the variant subgraphs extractor $f_{\theta^s}$ across domains that will be employed to adapt a classifier $g_{\phi^{s,e}}$ specific to the test domain $e$, which form the unstable GNN $g_{\phi^{s,e}} \circ f_{\theta^s}$ in the test domain $e$.

To achieve these learning goals, 
let $R_{\text{INV}}$ denote the risk of learning invariant GNNs proposed by existing methods, $R_{\text{joint}}$ denotes the risk (e.g., cross entropy) of the joint predictions $\text{COMBINE}(g_{\phi^c} \circ f_{\theta^c}, g_{\phi^{s,e'}} \circ f_{\theta^s})$ (Eq.~\ref{equ:combine_prob}) of the invariant GNN $g_{\phi^c} \circ f_{\theta^c}$ and unstable GNN $g_{\phi^{s,e'}} \circ f_{\theta^s}$ specific to the training domain $e'$, and $R_{CI}$ denote the penalty encouraging conditional independence $C \perp S | Y$. 
Technologically, we estimate modules $f_{\theta^c}$ and $g_{\phi^c}$ in two steps. First, we encourage the estimated invariant GNN $g_{\phi^c} \circ f_{\theta^c}$ to learn an invariant feature space through the invariant risk. 
Second, use our proposed PNS risk $R_{\text{NS}}$ to further encourage $g_{\phi^c} \circ f_{\theta^c}$ to find necessary and sufficient invariant subgraph space. 
In the meanwhile, we estimate the variant feature extractor $f_{\theta^s}$ in three steps. First, we introduce its downstream classifiers $g_{\phi^{s,e}}, ~e \in \mathcal{E}^{\text{train}}$ on each training environment to form the estimated unstable GNNs $g_{\phi^{s,e}} \circ f_{\theta^s}$ on training domains. 
Second, we use the COMBINATION function (Eq.~\ref{equ:combine_prob}) to jointly train the above invariant GNN and unstable GNNs. 
Third, given that the condition of the COMBINATION function is that $C \perp S|Y$, we add a penalty term $R_{CI}$ to the model. 
In summary, the aforementioned three steps can be formalized as the  following objective.
\begin{equation}\label{equ:pns_objv2}
\small
\begin{aligned}
     \min_{\theta^c, \theta^s, \phi^c, \Phi^s} \sum_{e \in \mathcal{E}_{\text{train}}}  &R^e_{\text{NS}}(f_{\theta^c}, g_{\phi^c}) +  R_{\text{INV}}^e(f_{\theta^c}, g_{\phi^c})\\
     &+  R_{\text{Joint}}^e(\text{COMBINE}(g_{\phi^c} \circ f_{\theta^c}, g_{\phi^{s,e}} \circ f_{\theta^s})) + \lambda \cdot R^e_{\text{CI}}(f_{\theta^c}, f_{\theta^s}),
\end{aligned}
\end{equation}
where $\Phi^s = \{\phi^{s,e}\}_{e \in \mathcal{E}_{\text{train}}}$ is the set of parameters of classifiers $g_{\phi^{s,e}}$ specific to different training environments and $\lambda \in [0, \infty)$ is its regularization hyperparameters. In practice, we use HSCIC~\cite{park2020measure} as $R_{CI}$, and the objective of CIGA as $R_{\text{INV}}$. Furthermore, we found that in practice it is unnecessary to set another hyperparameter to control the relative weights of the PNS risk $R_{\text{NS}}$, the invariant risk $R_{\text{INV}}$, and the joint risk $R_{\text{Joint}}$. Sections \ref{sec:pcg}-\ref{sec:pccg} show the implementation details of the components of Eq.~\ref{equ:pns_objv2}.

\subsubsection{Implementation of Necessary and Sufficient Invariant
Subgraphs Extractor $f_{\theta^c}$ and Domain Variant
Subgraphs Extractor $f_{\theta^s}$}\label{sec:pcg}
We employ the following implementation of $f_{\theta^c}$ and $f_{\theta^s}$ to generate an invariant subgraph $c$ and variant subgraph $s$, which can be formalized as follows. 
We first assume that variables $C$ and $S$ follow multivariate Bernoulli distributions with the parameters $\hat B^C$$\in$$(0, 1)^{n \times n}$ and $\hat B^S$$\in$$(0, 1)^{n \times n}$, i.e., $C \sim \text{Bernoulli}(\hat B^C)$ and $S \sim \text{Bernoulli}(\hat B^S)$, where $n$ is the number of nodes in graph $G$. Technologically, we estimate the parameters $\hat B^C$ and $\hat B^S$ of the  Bernoulli distribution in three steps. First, we use two layer graph neural network (GNN) to generate the node embeddings $Z^c$ and $Z^s$. Second, we calculate the parameter matrices $\hat B^C$ and $\hat B^S$, which denote the probability of the existence of each edge of $C$ and $S$. Third, we sample $C$ and $S$ from the estimated distributions. In summary, the aforementioned three steps can be formalized as follows:
\begin{equation}\label{equ:pcg}
\small
\begin{aligned}
    &Z^c \!=\! f_{\theta^c}(G) \!:=\! \text{GNN}(G; \theta^c)\!\in\! \mathbb R^{n \times d^c}, \hat{B}_C \!=\! \sigma(Z^c  Z^{cT}), C \!\sim\! \text{Bernoulli}(\hat{B}^C),\\
    &Z^s \!=\! f_{\theta^s}(G) \!:=\! \text{GNN}(G; \theta^s)\!\in\! \mathbb R^{n \times d^s}, \hat{B}^S \!=\! \sigma(Z^s  Z^{sT}), S \sim \text{Bernoulli}(\hat{B}^S),
\end{aligned}
\end{equation}
where $\text{GNN}(\cdot; \theta)$ denotes a particular graph neural network (e.g., GCN~\cite{DBLP:conf/iclr/KipfW17}) parameterized by $\theta$; $\sigma$ is the Sigmoid function. Here we employ Gumbel-Softmax~\cite{DBLP:conf/iclr/JangGP17} to sample $C$ and $S$.

\subsubsection{Implementation of Invariant Classifier $g_{\phi^c}$ and Variant Classifier $g_{\phi^{s,e}}$}\label{sec:pyc}
To estimate the predicted class probabilities, we use the $\text{READOUT}$ (e.g., mean)~\cite{DBLP:conf/iclr/XuHLJ19} function aggregates node embeddings to obtain the entire graph’s representation, and employ different three layer Multilayer Perceptron (MLP) with $\text{SOFTMAX}$ function activation to estimate the probabilities for each different environment, as follows:
\begin{equation}\label{equ:prob}
\small
\begin{aligned}
    &P(Y=y|C=c) = g_{\phi^c}(c)_y := \text{SOFTMAX}(\text{MLP}(\text{READOUT}(Z_C)))_y,\\
    &P(Y=y|S=s, E=e) = g_{\phi^{s,e}}(s)_y := \text{SOFTMAX}(\text{MLP}_e(\text{READOUT}(Z_S)))_y,
\end{aligned}
\end{equation}
where $g(\cdot)_y$ denotes the $y$-th entry of the prediction and $\text{MLP}_e$ represents an MLP specific to environment $e$.

\subsubsection{Implementation of $P_{\theta^c}(C=c|G)$ and $P(Y=y|E=e)$}\label{sec:pccg}
To estimate the component $P_{\theta^c}(C=c|G)$ of PNS risk $R^e_{\text{NS}}$ (Eq.~\ref{equ:pns_objv1}), our main idea is to draw samples from $P_{\theta^c}(C|G)$ and estimate the probability $P_{\theta^c}(C=c|G)$ by counting how many samples are isomorphic to $c$. Since graph isomorphism is an NP-hard problem, we simplify this problem by calculating the similarity between their graph representations. 
First, based on our implementation of $P_{\theta^c}(C|G)$ (Section~\ref{sec:pcg}), we draw $k$ subgraphs $c_1, ..., c_k$ from $P(C|G)$. 
Second, referring to Eq.~\ref{equ:pcg}-\ref{equ:prob}, we feed these subgraphs $c_1, ..., c_k$, and $c$ into the $\text{GNN}(\cdot;\theta^c)$ and use the $\text{READOUT}$ function to obtain their graph representations.
Third, compute the similarity (we adopt the inner product) between $c$ and $c_1,...,c_k$ in terms of their representation and average these similarities. The three steps can be summarized as follows:
\begin{equation}
\small
\begin{aligned}
    &P_{\theta^c}(C=c|G) \approx \frac{1}{k} \sum^k_{i=1} \sigma(\tilde Z^{(c_i)T} \tilde Z^{(c)}), \quad c_1, ..., c_k \overset{\text{i.i.d.}}{\sim} P_{\theta^c}(C|G),\\
    &\tilde Z^{(c_k)} \!=\! \text{READOUT}(\text{GNN}(G^{c_k}; \theta^c)), 
    \tilde Z^{(c)} \!=\! \text{READOUT}(\text{GNN}(G^{c}; \theta^c)), 
\end{aligned}
\end{equation}
where $G^{c_k}$ denotes the graph data associated with subgraph $c_k$, $\tilde{Z}^{(c_k)} \!\in\! \mathbb R^{d^c}$ and $\tilde{Z}^{(c)} \!\in\! \mathbb R^{d^c}$ denote the graph representation of $c_k$ and $c$, respectively. Finally, We estimate the component $P(Y=y|E=e)$ of PNS risk $R^e_{\text{NS}}$ using the following empirical distribution.
\begin{equation}
\small
    P(Y=y|E=e) \approx \frac{1}{|\mathcal{D}^e|} \sum_{ y^e_i \in \mathcal{D}^e} \mathbb I(y^e_i=y),
\end{equation}
where $\mathbb I(\cdot)$ is an indicator function, i.e., $\mathbb I(x) = 1$ if the statement $x$ is true and 0 otherwise.
\subsection{Test-domain Adaptation Without Labels.}\label{sec:adaptation}

Given the trained invariant GNN $g_{\phi^c} \circ f_{\theta^c}$ and the domain varying feature extractor $f_{\theta^s}$, our goal in the test domain is to adapt a classifier $g_{\phi^{s,e}}$ specific to test domain $e$ learned on top of trained $f_S$, so that we can make optimal use of the domain variant features extracted from $f_{\theta^s}$. By Theorem~\ref{lma:boost}, this goal can be achieved through the following three steps. 
First, given the unlabelled test domain data $\{G_i\}^{n}_{i=1}$, compute soft pseudo-labels $\{\hat Y\}^n_{i=1}$ with
\begin{equation}
\small
    \hat{y}_i = \arg\max_y g_{\phi^c}(f_{\theta^c}(G_i)).
\end{equation}
Second, letting $\ell\!:\! \mathcal{Y} \!\times\! \mathcal{Y} \!\to\! \mathbb R$ be a loss function (e.g., cross entropy), fit the biased classifier $\hat g_{\phi^{s,e}}(S)$ specific to test domain $e$ on pseudo-labelled data $\{(s_i = f_{\theta^{s,e}}(G_i), \hat{y}_i)\}$ with
\begin{equation}\label{equ:bggggsseee}
\small
    \min_{\phi^{s,e}} \frac{1}{n}\sum_{s_i, \hat{y}_i} \ell(\hat g_{\phi^{s,e}}(s_i), \hat{y}_i).
\end{equation}
Third, through optimizing Eq.~\ref{equ:bggggsseee}, we are given the trained biased classifier $\hat g_{\phi^{s,e }}(S)$ specific to the test domain $e$. By Theorem~\ref{lma:boost}, we calibrate $\hat g_{\phi^{s,e }}(S)$ as follows:
\begin{equation}
\small
     g_{\phi^{s,e}}(s_i) = \frac{\hat g_{\phi^{s,e}}(s_i) + \hat{\epsilon}_0- 1}{\hat{\epsilon}_0 + \hat{\epsilon}_1 -1}, 
\end{equation}
\begin{equation}
\small
     \hat{\epsilon}_1 = \frac{\sum_{G_i} g_{\phi^c}(f_{\theta^c}(G_i))^2}{\sum_{G_i} g_{\phi^c}(f_{\theta^c}(G_i))},\hat{\epsilon}_0 = \frac{\sum_{G_i} 1 - g_{\phi^c}(f_{\theta^c}(G_i))^2}{\sum_{G_i} 1 - g_{\phi^c}(f_{\theta^c}(G_i))}
\end{equation}
Finally, by the first line of Eq.~\ref{equ:calibrate}, we combine the prediction between the trained invariant GNN $g_{\phi^c} \circ f_{\theta^c}$ and the trained calibrated unstable GNN $g_{\phi^{s,e}} \circ f_{\theta^s}$ specific to the test domain $e$.

\section{Experiments}\label{sec:exp}
In this section, we evaluate the effectiveness of our proposed SNIGL model on both synthetic and real-world datasets by answering the following questions. 
\begin{itemize}[leftmargin=*]
    \item \textbf{Q1:} Whether the proposed SNIGL can outperform existing state-of-the-art methods in terms of model generalization.
    \item \textbf{Q2:} Can the proposed PNS risk learning necessary and sufficient invariant latent subgraphs well?
    \item \textbf{Q3:} Do ensemble strategies that exploit domain-varying subgraphs benefit model performance?
    \item \textbf{Q4:} What are the learning patterns and insights from SNIGL training? In particular, how do invariant or variant subgraphs help improve generalization?
\end{itemize}
\subsection{Experimental Setup}
\begin{table}[htbp]
  \centering
  \caption{Statistics of the datasets. 
  }
  \scalebox{0.7}{
    \begin{tabular}{llccccc}
    \toprule
    \multicolumn{1}{l}{Category} & Dataset & Shift source & \# Graphs & \# Nodes (Avg.) & \# Edges (Avg.)  & Metric \\
\hline
\hline
    \multirow{3}[4]{*}{\makecell[l]{Synthetic\\datasets}} & SP-Motif-Mixed & motif\&feature & 30,000 & 13.8  & 39.9       & ACC \\
\cmidrule{2-7}          & \multicolumn{1}{l}{\multirow{2}[2]{*}{GOOD-Motif}} &motif & 24,000 & 20.9  & 56.9       & ROC-AUC \\
          &       & size  & 24,000 & 32.7  & 86.8       & ROC-AUC \\
\hline
\hline
    \multirow{5}[4]{*}{\makecell[l]{Real-world\\datasets}} & \multicolumn{1}{l}{\multirow{2}[2]{*}{GOOD-HIV}} & scaffold & 32,903 & 25.3  & 54.4       & ROC-AUC \\
          &       & size  & 32,903 & 24.9  & 53.6       & ROC-AUC \\
\cmidrule{2-7}          & OGBG-Molsider & scaffold & 1,427  & 33.6  & 70.7      & ROC-AUC \\
          & OGBG-Molclintox & scaffold & 1,477  & 26.2  & 55.8       & ROC-AUC \\
          & OGBG-Molbace & scaffold & 1,513  & 34.1  & 73.7       & ROC-AUC \\
    \bottomrule
    \end{tabular}%
    }
  \label{tab:Statistics}%
\end{table}%
\subsubsection{Dataset}
To evaluate the effectiveness of our proposed SNIGL, we utilize six public benchmarks under different distribution shifts for graph classification tasks, including two synthetic datasets Spurious-Motif-Mixed~\cite{DBLP:conf/iclr/WuWZ0C22} and GOOD-Motif~\cite{gui2022good}, as well as four real-world datasets GOOD-HIV~\cite{gui2022good}, OGBG-Molsider, OGBG-Molclintox and OGBG-Molbace~\cite{hu2020open}. Table~\ref{tab:Statistics} summarizes the statistics of seven datasets.
\begin{itemize}[leftmargin=*]
    \item \textbf{Spurious-Motif-Mixed (SP-Motif-Mixed)}~\cite{chen2022learning} and \textbf{GOOD-Motif}~\cite{gui2022good} are synthetic datasets constructed based on BAMotif~\cite{ying2019gnnexplainer}. On one hand, the SP-Motif-Mixed dataset is created based on three motifs (House, Cycle, Crane) and three base graphs (Tree, Ladder, Wheel). Its distribution shifts are a mixture of the following two sources. The first shift is injected by the structure. Specifically, for a given bias $b$, a particular motif (e.g., house) is attached to a particular base graph (e.g., tree) with probability $b$, while for the other subgraphs, the motif is attached with probability $(1 - b)/2$ (e.g., house-ladder, house-wheel). 
    The second shift comes from the node attributes. Specifically, for a given bias $b$, the node features of the graph with label $y$ are also assigned $y$ with probability $b$, and the probability of being assigned to other labels is $(1-b)/ 2$.  In our experiment, we closely followed the literature of DIR~\cite{DBLP:conf/iclr/WuWZ0C22} and CIGA~\cite{chen2022learning}, and we  selected two common bias values $b$$=$$0.5$ and $b$$=$$0.9$ for validation. On the other hand, compared with SP-Motif-Mixed, GOOD-Motif studies \emph{covariate shift} and \emph{concept shift} by splitting the dataset by graph type and size, respectively.
    \item \textbf{GOOD-HIV} is a dataset consisting of a molecular graph where nodes are atoms and edges are chemical bonds. The label is whether the molecule can inhibit HIV replication. We split the dataset by molecular scaffold and size that should not determine the label, thereby injecting distribution shifts into the data.
    \item \textbf{OGBG-Molsider}, \textbf{OGBG-Molclintox}, and \textbf{OGBG-Molbace}~\cite{hu2020open} are three molecular property classification datasets provided by the OPEN GRAPH BENCHMARK (OGB). The datasets consist of molecules, and their  attributes are cast as binary labels. The \emph{scaffold splitting procedure} is used to split molecules according to their two-dimensional skeletons, which  introduce spurious correlations between functional groups due to the selection bias of the dataset.  
\end{itemize}
\subsubsection{Baselines}
We compare the proposed SNIGL method with three categories of methods, namely the state-of-the-art OOD methods from the Euclidean regime, and from the graph regime, as well as  the conventional GNN-based methods. The OOD methods from the graph regime include:
\begin{itemize}[leftmargin=*]
    \item \textbf{GroupDRO}~\cite{DBLP:journals/corr/abs-1911-08731} involves using regularization with Distributionally Robust Optimization (DRO) to enhance worst-group generalization in overparameterized neural networks.
    \item \textbf{DIR}~\cite{DBLP:conf/iclr/WuWZ0C22} enhances the interpretability and generalization of GNNs by identifying stable causal patterns via training distribution interventions and using classifiers on causal and non-causal parts for joint prediction.
    \item \textbf{CIGA}~\cite{chen2022learning} propose an information-theoretic objective to extract the invariant subgraphs that maximize the preservation of invariant intra-class information.
    \item \textbf{GIL}~\cite{li2022learning} proposes an invariant subgraph objective function for a GNN-based subgraph generator, leveraging variant subgraphs to infer potential environment labels.
    \item \textbf{OOD-GNN}~\cite{DBLP:journals/tkde/LiWZZ23} proposes to use random Fourier features and a global weight estimator to encourage the model to learn to eliminate statistical dependencies between relevant and irrelevant graph representations.
    \item \textbf{GSAT}~\cite{miao2022interpretable} injects stochasticity into attention weights to filter out task-irrelevant graph components and learns attention focused on task-relevant subgraphs, improving interpretability and prediction accuracy.
    \item \textbf{StableGNN}~\cite{fan2023generalizing} uses a differentiable graph pooling layer to extract subgraph-based representations and uses a causal variable differentiation regularizer to correct the biased training distribution 
    to remove spurious correlations.
    \item \textbf{GALA}~\cite{chen2024does} learns invariant subgraphs through proxy predictions from an auxiliary model that is sensitive to changes in the graph environment or distribution.
\end{itemize}
 The OOD methods from the Euclidean regime include:
\begin{itemize}[leftmargin=*]
    \item \textbf{DANN}~\cite{lempitsky2016domain} proposes a gradient reversal layer in the neural network architecture to develop features that are discriminative and domain-invariant for prediction tasks.
    \item \textbf{Coral}~\cite{sun2016deep} utilizes a nonlinear transformation to align the second-order statistics of activations between the source and target domains in neural networks for unsupervised domain adaptation.
    \item \textbf{Mixup}~\cite{DBLP:conf/iclr/ZhangCDL18} trains  neural networks (NNs) on convex combinations of pairs of examples and their labels, which regularizes the NN to favor simple linear behavior between training examples, thereby improving generalization.
    \item \textbf{IRM}~\cite{DBLP:journals/corr/abs-1907-02893}  proposes an invariant risk minimization strategy that forces the model’s decision boundary to be as consistent as possible across environments.
    \item \textbf{VREx}~\cite{krueger2021out} minimizes the model’s sensitivity to distribution shifts by reducing risk variance across training domains and uses optimization over extrapolated domains to achieve robustness to both causal and covariate shifts.
    \item \textbf{ERM}~\cite{vedantam2021empirical} involves using Fisher information, predictive entropy, and maximum mean discrepancy (MMD) to understand the OOD generalization of deep neural networks trained with empirical risk minimization.
\end{itemize}
Conventional GNN-based methods include:
\begin{itemize}[leftmargin=*]
    \item \textbf{GCN}~\cite{DBLP:conf/iclr/KipfW17} performs weighted aggregation of node features based on spectral graph theory to obtain node embeddings that can be used for downstream tasks.
    \item \textbf{GAT}~\cite{DBLP:journals/corr/abs-1710-10903} incorporates the self-attention mechanism into the message passing paradigm to adaptively select subgraphs that are discriminative for labels.
    \item \textbf{GraphSage}~\cite{hamilton2017inductive} generates node embeddings by sampling and aggregating features from the local neighborhoods of nodes, and a mini-batch training manner.
    \item \textbf{GIN}~\cite{DBLP:conf/iclr/XuHLJ19} performs node feature aggregation in an injective manner based on the theory of the 1-WL graph isomorphism test.
    \item \textbf{JKNet}~\cite{xu2018representation} involves flexibly leveraging different neighborhood ranges for each node to create structure-aware representations, enhancing performance by adapting to local neighborhood properties and tasks.
    \item \textbf{DIFFPOOL}~\cite{ying2018hierarchical} proposes a differentiable graph pooling module that generates hierarchical graph representations by learning soft cluster assignments for nodes. 
    \item \textbf{SGC}~\cite{wu2019simplifying} simplifies GCNs by removing non-linear activation functions and collapsing weight matrices, resulting in a scalable, interpretable linear model that acts as a low-pass filter followed by a linear classifier.
    \item \textbf{AttentiveFP}~\cite{xiong2019pushing} employs a multi-level, multi-stage attention mechanism to enhance molecular representation and interpretability to capture complex molecular interactions and relevant substructures for drug discovery tasks.
    \item \textbf{CMPNN}~\cite{10.5555/3491440.3491832} involves enhancing molecular embeddings by strengthening message interactions between nodes and edges through a communicative kernel and enriching the message generation process with a message booster module.
\end{itemize}
Since we mainly focus on domain generalization on graph data in this paper, 
we evaluate the effectiveness of the DG methods of graph data on all datasets, using larger datasets SP-Motif-Mixed, GOOD-Motif and GOOD-HIV to evaluate DG methods designed for general data, and smaller datasets OGBG-Molsider, OGBG-Molclintox and OGBG-Molbace to evaluate the conventional GNN-based methods.
\subsubsection{Implementation Details} 
The configurations of our SNIGL as well as  baselines are as follows.  
For a fair comparison, all methods utilize GIN as the underlying GNN backbone and use the max readout function to derive the embedding for the graph. We use ADAM optimizer in all experiments. We use ADAM optimizer in all experiments. All experiments are implemented by Pytorch on a single NVIDIA RTX A5000 24GB GPU. For our SNIGL, we set the dimensions of node embedding $d^c$ and $d^s$ in Eq.~\ref{equ:pcg} are both set to 300. We further set the regularization coefficient $\lambda$ in Eq.~\ref{equ:pns_objv2} to $0.001$. We used the Adam optimizer and the learning rate in the optimization algorithm in the training phase and the test phase was set as 0.001 and 0.0001, respectively. The maximum number of training epochs was set as 200. For the baselines, we tuned their settings empirically.

For performance evaluation, we closely follow the literature of GOOD~\cite{gui2022good} and OGBG~\cite{hu2020open}. Specifically, similar to the experimental setup in GOOD and OGBG, we report the ROC-AUC for all datasets, except for SP-Motif-Mixed where we use accuracy following CIGA~\cite{chen2022learning}. 
Further, we repeat the evaluation four times, select models based on the validation performances, and report the mean and standard deviation of the corresponding metric on the test set.
\subsection{Comparison to baselines}
\begin{table*}[htbp]
  \centering
  \caption{Performance on the graph classification task, measured in accuracy on SP-Motif-Mixed and ROC-AUC on other datasets. Standard deviation errors are given. The best performance is marked in bold, and the second best is underlined.}
  \scalebox{0.6}{
    \begin{tabular}{lrccrccrcc}
    \toprule
    Method &       & \multicolumn{2}{c}{SP-Motif-Mixed} &       & \multicolumn{2}{c}{GOOD-Motif	} &       & \multicolumn{2}{c}{GOOD-HIV} \\
    \midrule
          &       & bias=0.5 & bias=0.9 &       & motif & size  &       & scaffold & size \\
\cmidrule{3-4}\cmidrule{6-7}\cmidrule{9-10}
    GroupDRO &       & 0.5782(0.0172) & 0.5410(0.0997)     &       & 0.6196(0.0827) & 0.5169(0.0222) &       & 0.6815(0.0284) & 0.5775(0.0286) \\
    DIR   &       & 0.6382(0.0193) & 0.5508(0.6360)     &       & 0.3999(0.0550)  & 0.4483(0.0400) &       & 0.6844(0.0251) & 0.5767(0.0375) \\
    CIGAv1 &       & 0.5400(0.0173) & 0.5178(0.0729) &       & 0.6643(0.1131) & 0.4914(0.0834) &       & 0.6940(0.0239) & 0.6181(0.0168) \\
    CIGAv2 &       & 0.5930(0.0188) & 0.6341(0.0738) &       & 0.6715(0.0819) & 0.5442(0.0311) &       & 0.6940(0.0197)  & 0.5955(0.0256) \\
    GIL &       & 0.6947(0.0157) & 0.6741(0.0587) &       & 0.6274(0.0122) & 0.5147(0.0254) &       & 0.6925(0.0115)  & 0.5424(0.0274) \\
    OOD-GNN &       & 0.5784(0.00214) & 0.6184(0.0354) &       & 0.5714(0.0247) & 0.5874(0.0412) &       & 0.6854(0.0124)  & 0.5281(0.0341) \\
    GSAT  &       & 0.4223(0.0168)  & 0.5217(0.0311)     &       & 0.5513(0.0541) & \underline{0.6076}(0.0594) &       & 0.7007(0.0176) & 0.6073(0.0239) \\
    StableGNN &       & 0.6214(0.0214) & 0.6341(0.0165) &       & 0.5742(0.0214) & 0.5454(0.0142) &       & 0.6492(0.0261)  & 0.5269(0.0375) \\
    GALA  &       & \underline{0.7196}(0.0182)  & \underline{0.7040}(0.0763)     &       & 0.6041(0.015) & 0.5257(0.0082)  &       & 0.6864(0.0225) & 0.5948(0.0138) \\
    \midrule
    DANN  &       & 0.5793(0.0194) & 0.5110(0.9254)     &       & 0.5154(0.0728) & 0.5186(0.0244)  &       & 0.6943(0.0242) & \underline{0.6238}(0.0265) \\
    Coral &       & 0.5767(0.0191) & 0.5607(0.1807)     &       & 0.6623(0.0901) & 0.5371(0.0275) &       & \underline{0.7069}(0.0225) & 0.5939(0.0290) \\
    Mixup &       & 0.5153(0.0167) & 0.4533(0.0571)     &       & \underline{0.6967}(0.0586) & 0.5131(0.0256) &       & 0.7065(0.0186) & 0.5911(0.0311) \\
    IRM   &       & 0.5745(0.0186) & 0.4568(0.0488)  &       & 0.5993(0.1146) & 0.5368(0.0411) &       & 0.7017(0.0278) & 0.5994(0.0159) \\
    VREx  &       & 0.4737(0.0175) & 0.4886(0.0969) &       & 0.6653(0.0404) & 0.5447(0.0342) &       & 0.6934(0.0354) & 0.5849(0.0228) \\
    ERM   &       & 0.5725(0.0234)  & 0.4964(0.0463) &       & 0.6380(0.1036) & 0.5346(0.0408) &       & 0.6955(0.0239) & 0.5919(0.0229) \\
    \midrule
    SNIGL &       & \textbf{0.8000}(0.0302) & \textbf{0.7822}(0.0918) &       & \textbf{0.7748}(0.0221) & \textbf{0.6326}(0.0602) &       & \textbf{0.7278}(0.0072)  & \textbf{0.6341}(0.0031) \\
    \bottomrule
    \end{tabular}%
    }
  \label{tab:good}%
\end{table*}%
In this section, we answer Question Q1: how effective is our approach compared to existing methods? As shown in Table~\ref{tab:good}, we can find that our SNIGL method outperforms the other baselines with a large margin in different biases on the standard SPMotif-Mixed dataset, and in different split methods (i.e., structure, scaffold, size) on GOOD-Motif and GOOD-HIV datasets. 
In particular, we can obtain the following conclusions.
1) GALA achieves the second best performance on SP-Motif-Mixed under bias $b=0.5$ and $b=0.9$, while Mixup and GSAT achieve the second best performance on GOOD-Motif under motif-splitting and size-splitting, respectively, as well as Coral and DANN achieve the second best performance on GOOD-HIV under scaffold-splitting and size-splitting, respectively. 
Our proposed SNIGL is capable of achieving further improvements against GALA by $11.2\%$ and $11.1\%$ on SP-Motif-Mixed under bias $b=0.5$ and $b=0.9$, against Mixup and GSAT by $11.2\%$ and $4.1\%$ on GOOD-Motif, 
as well as against Coral and DANN by $3.0\%$ and $1.7\%$ on GOOD-HIV, indirectly reflecting that our method can extract the invariant subgraphs with the property of necessity and sufficiency. 
2) We also find that the performance drops with increasing biases, showing that over heavy bias can still influence generalization. 
3)  By comparing the variance of different methods, we can find that the variance of some baselines is large, this is because these methods generate the invariant subgraph by trading off two objects, which might lead to unstable results. In the meanwhile, the variance of our method is much smaller, reflecting the stability of our method.

\begin{wraptable}{l}{8.cm}\label{tab:ogb_cls2}
  \centering
  \caption{The ROC-AUC results on the graph classification task on the OGB datasets. Standard deviation errors are given.}
  \scalebox{0.65}{
    \begin{tabular}{lccc}
    \toprule
    Method         & Molsider & Molbace & Molclintox\\
    \midrule
    GroupDRO          & 0.5124(0.0145) & 0.7454(0.0245) & 0.7921(0.0245) \\
    DIR          & 0.5794(0.0111) & 0.7834(0.0145) & 0.8129(0.0307) \\
    CIGA          & 0.6745(0.0241) & 0.7546(0.0114) & 0.8841(0.0378) \\
    GIL          & 0.6350(0.0057) &   0.6915(0.0321)    &    0.8574(0.0164)  \\
    OOD-GNN        & 0.6400(0.0130) & \underline{0.8130}(0.0120) & 0.9140(0.0130)  \\
    GSAT            & 0.6654(0.0247) & 0.7845(0.0146) & 0.8974(0.0235)  \\
    StableGNN        & 0.5915(0.0117) & 0.7695(0.0327) & 0.8798(0.0237) \\
    GALA         & 0.5894(0.0051) & 0.7893(0.0037)  & 0.8737(0.0189) \\
     \midrule
    GCN           & 0.5843(0.0034) & 0.7689(0.0323) & 0.9027(0.0134)\\
    GAT           & 0.5956(0.0102) & 0.8124(0.0140) & 0.8798(0.0011)\\
    SAGE        & 0.6059(0.0016)  & 0.7425(0.0248) & 0.8877(0.0066)\\
    GIN           & 0.5817(0.0124) & 0.7638(0.0387) & \underline{0.9155}(0.0212)\\
    JKNet         & 0.5818(0.0159) & 0.7425(0.0291) & 0.8558(0.0217)\\
    DIFFPOOL       & 0.5758(0.0151)  & 0.7525(0.0116) & 0.8241(0.0167) \\
    SGC          & 0.5906(0.0032) & 0.6875(0.0021) & 0.8536(0.0028) \\
    AttentiveFP        & \underline{0.6919}(0.0148) & 0.7767(0.0026) & 0.8335(0.0216)  \\
    CMPNN        & 0.5799(0.0080) & 0.7215(0.0490) & 0.7947(0.0461) \\
    \midrule
    SNIGL        & \textbf{0.7278}(0.0120) & \textbf{0.8314}(0.0181) & \textbf{0.9323}(0.0042) \\
    \bottomrule
    \end{tabular}%
    }
  \label{tab:ogbg}%
\end{wraptable}
Next, we further compare the effectiveness of our SNIGL on more real-world datasets. The experiment results on the OGB datasets are shown in Table~\ref{tab:ogbg}. According to the experiment results, we can draw the following conclusions. 
1) The proposed SNIGL outperforms all other baselines on all the datasets, which is attributed to both the sufficiency and necessity restriction for invariant subgraphs and the ensemble training strategy with the help of the domain variant subgraphs. 

2) Some conventional GNN-based methods such as GCN and GIN do not achieve the ideal performance, reflecting that these methods have limited generalization. 
3) The causality-based baselines also achieve comparable performance and the methods based on environmental data augmentation achieve the closest results, reflecting the usefulness of the environment augmentation. However, since it is difficult for these methods to extract necessary and sufficient invariant subgraphs, the experimental results of these methods, such as StableGNN on Mollintox, DIR on Molsider and Mollintox, are difficult to achieve ideal results.

\subsection{Ablation Study}
As Section~\ref{sec:alg} states, the PNS risk and the ensemble strategy of combining domain-variant and domain-invariant features are key components in our proposed SNIGL. To answer questions Q2 and Q3 to show if these key components benefit the generalization performance of our SNIGL, we can thus derive the following variants of SNIGL:
(1) SNIGL without PNS risk denoted as SNIGLw/oPNS; (2) SNIGL without ensemble strategy for the combination of variant and invariant features denoted as SNIGLw/oEN. To show the impact of PNS risk and the ensemble strategy, we compare SNIGL with the above variants.
\begin{wrapfigure}{l}{7.0cm} 
    \centering
\includegraphics[width=0.5\columnwidth]{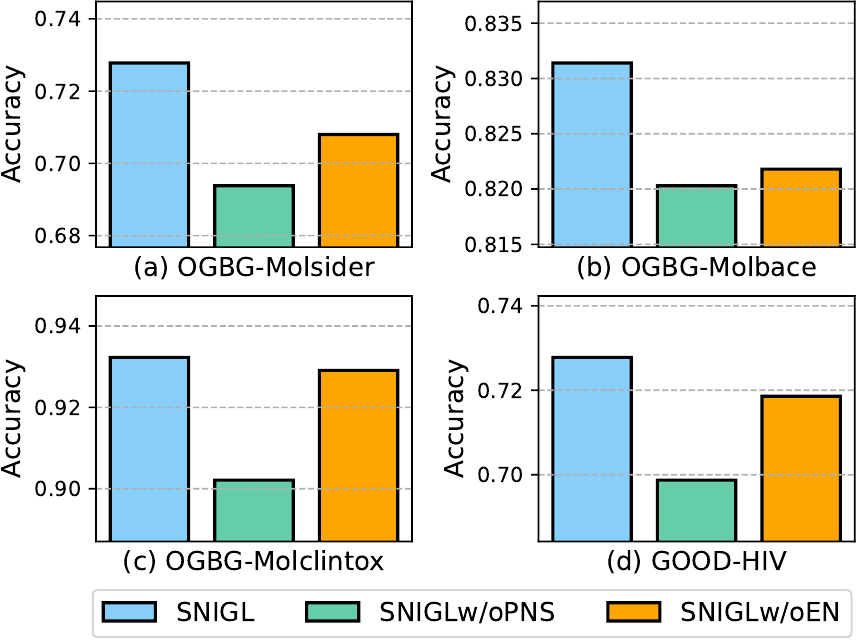}
\vspace{-2mm}
    \caption{Ablation studies on four datasets. We explore the impact of the key components of our SNIGL, i.e, PNS risk and the ensemble strategy }
\vspace{-5mm}
    \label{fig:ablation}
\end{wrapfigure}
In Figure~\ref{fig:ablation}, we observe that SNIGL achieves better performance than the variants in terms of accuracy, demonstrating the effectiveness of the PNS risk and our ensemble strategy. Firstly, in order to demonstrate the impact of the PNS risk on SNIGL's performance, we compare SNIGL with the variant SNIGLw/oPNS. As Figure~\ref{fig:ablation}' blue/green columns show, we observe that the accuracy of SNIGLw/oPNS is lower than that of SNIGL on all four datasets, demonstrating the importance of PNS risk in GNNs, that is, the invariant features learned from these data sets contain many insufficient or unnecessary invariant features, which may be harmful to prediction. Secondly, in order to investigate the impact of the ensemble strategy, 
we removed the combination of the trained invariant GNN $g_{\phi^c} \circ f_{\theta^c}$ and the unstable GNN $g_{\phi^{s,e}} \circ f_{\theta^s}$ and just used $g_{\phi^c} \circ f_{\theta^c}$ to predict,  and the comparison between SNIGL and SNIGLw/oEN  is as Figure~\ref{fig:ablation} illustrates. As can be seen, SNIGL significantly outperforms SNIGLw/oEN on Molsider and Molbace datasets. 
This result reflects that Molsider and Molbace may contain few sufficient and necessary invariant subgraphs, so the prediction performance drops significantly without integrating domain variant features to assist prediction. On the contrary, the performance gap between  SNIGL and SNIGLw/oEN is more subtle on OGBG-Molclintox and GOOD-HIV datasets, which is intuitive since these two datasets may contain more necessary and sufficient invariant features, so the performance will remain stable with or without the assistance of domain variant features.

\subsection{Visualization}
\begin{figure}
    \centering
\includegraphics[width=0.95\columnwidth]{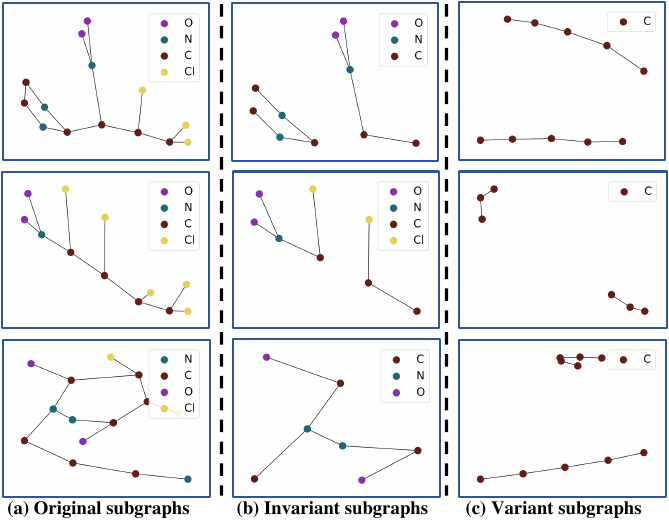}
\vspace{-3mm}
    \caption{Visualization of molecule examples in the OGBG-HIV dataset. Nodes with different colors denote different atoms, and edges denote different chemical bonds.}
    \vspace{-3mm}
    \label{fig:vis}
\end{figure}
To answer question Q4, i.e., what patterns are actually learned by our SNIGL and how such patterns improve the generalization of SNIGL, we visualize the domain invariant and domain variant subgraphs learned by SNIGL on the OGBG-HIV dataset as shown in Figure~\ref{fig:vis}. The visualization results allow us to draw the following conclusions. 
1) The domain invariant substructures are sparse  basic structures, showing that our method is capable of identifying necessary and sufficient latent substructures. 
2) Our SNIGL model is capable of generating reasonable molecular substructures comprising basic functional groups. For example, SNIGL can extract substructures such as ``-NO2'' in the second line's domain invariant subgraph, which is composed of two purple nodes and one green node. 
This may offer an efficient avenue for uncovering the latent value of molecules.
3) We find that some atoms appear neither in the domain-invariant subgraph nor in the domain-variant subgraph. For example, in the example on the third line, atom $-Cl$ disappears in both the domain invariant/variant subgraphs, which indicates that our model can indeed remove redundant structures.

\section{Related Work}
\subsection{Graph Out-of-Distrubtion. }
In this subsection, we provide an introduction to domain generalization of graph classification \cite{fan2023generalizing,yang2022learning,guo2020graseq,liu2022graph,chen2022learning,10027780}. Existing works on out-of-distribution (OOD) \cite{DBLP:journals/corr/abs-2108-13624} mainly focus on the fields of computer vision \cite{zhang2021deep,zhang2022multi} and natural language processing \cite{chen2021hiddencut}, but the OOD challenge on graph-structured data receives less attention. Considering that the existing GNNs lack out-of-distribution generalization \cite{li2022graphde,zhao2020uncertainty,liu2023flood,sui2022causal,sun2022does,DBLP:journals/tkde/LiWZZ23,DBLP:journals/corr/abs-2311-04837}, Li et al. \cite{DBLP:journals/tkde/LiWZZ23} proposed OOD-GNN to tackle the graph OOD (OOD) challenge by addressing the statistical dependence between relevant and irrelevant graph representations. Recognizing that spurious correlations often undermine the generalization of graph neural networks (GNN), Fan et al. propose the StableGNN \cite{fan2023generalizing}, which extracts causal representation for GNNs with the help of stable learning. Aiming to mitigate the selection bias behind graph-structured data, Wu et al. further proposes the DIR model \cite{DBLP:conf/iclr/WuWZ0C22} to mine the invariant causal rationales via causal intervention. These methods essentially employ causal effect estimation to make invariant and spurious subgraphs independent. And the augmentation-based model is another type of the important method. Liu et al. \cite{10.1145/3534678.3539347} employ augmentation to improve the robustness and decompose the observed graph into the environment part and the rationale part. Recently, Chen et al.~\cite{chen2022learning, chen2024does} investigate the usefulness of
 augmented environment information from a theoretical perspective. And Li et. al \cite{DBLP:journals/corr/abs-2311-04837} further consider a concrete scenario of graph OOD, i.e., molecular property prediction from the perspective of latent variables identification \cite{DBLP:conf/nips/0001C0SH023}. Although the aforementioned methods mitigate the distribution shift of graph data to some extent, they can not extract the invariant subgraphs with Necessity and Sufficiency \cite{DBLP:conf/nips/YangZF0LTWW23}. Moreover, as \cite{DBLP:conf/nips/Eastwood0NPKS23} discussed, the domain variant subgraphs also play a critical role when the data with noisy labels \cite{liu2015classification,wu2024time,bai2023subclass}. In this paper, we propose the SNIGL method, which unifies the extraction of the invariant latent subgraph with necessity and sufficiency and the exploitation of variant subgraphs via an ensemble manner. 

\subsection{Probability of Necessity and Sufficiency}

As the probability of causation, the Probability of Necessity and Sufficiency (PNS) can be used to measure the ``if and only if'' of the relationship between two events. Additionally,  the Probability of Necessity (PN) and Probability of  Sufficiency (PS) are used to evaluate the ``sufficiency cause'' and ``necessity cause'', respectively. 
Pearl \cite{pearl2000models} and Tian and Pearl \cite{tian2000probabilities} formulated precise meanings for the probabilities of causation using structural causal models. 
The issue of the identifiability of PNS initially attracted widespread attention 
\cite{galles1998axiomatic,halpern2000axiomatizing,pearl2009causality,DBLP:conf/nips/CaiWJ023,tian2000probabilities, li2019unit,li2022unit, DBLP:conf/ijcai/MuellerLP22, dawid2017probability, li2024probabilities, gleiss2019quantifying, zhang2022partial, li2022bounds}.
Kuroki and Cai~\cite{kuroki2011statistical} and Tian and Pearl~\cite{tian2000probabilities} demonstrated how to bound these quantities from data obtained in experimental and observational studies to solve this problem.
These bounds lie within the range in which the probability of causation must lie, however, it has been pointed out that these bounds are too wide to assess the probability of causation.
To overcome this difficulty, 
Pearl demonstrated that identifying the probabilities of causation requires specific functional relationships between the causes and their outcomes \cite{pearl2000models}. 
Recently, incorporating PNS into various application scenarios has also attracted much attention and currently has many applications \cite{tan2022learning, galhotra2021explaining,watson2021local,DBLP:journals/corr/abs-2212-07056,DBLP:conf/ijcai/MuellerLP22,beckers2021causal,shingaki2021identification}. 
For example, in ML explainability, CF$^2$ \cite{tan2022learning}, LEWIS \cite{galhotra2021explaining}, LENS \cite{watson2021local}, NSEG \cite{DBLP:journals/corr/abs-2212-07056} and FANS~\cite{DBLP:journals/corr/abs-2402-08845} use sufficiency or necessity to measure the contribution of input feature subsets to the model's predictions. In the causal effect estimation problem \cite{DBLP:conf/ijcai/MuellerLP22,beckers2021causal,shingaki2021identification}, it can be used to learn individual responses from population data \cite{DBLP:conf/ijcai/MuellerLP22}. 
In the out-of-distribution generalization problem, CaSN employs PNS to extract domain-invariant information \cite{DBLP:conf/nips/YangZF0LTWW23}. 
Although CaSN is effective in extracting sufficient and necessary invariant representations, CaSN ignores the fact that the data may not have sufficient and necessary invariant features. Furthermore, CaSN requires the assumption that the classifier is linear in order to learn sufficient and necessary invariant features, which is unrealistic for GNN-based predictors. 

\section{Conclusion}
This paper presents a unified framework called  Sufficiency and Necessity Inspired Graph Learning (SNIGL) for graph out-of-distribution learning, leveraging the probability of necessity and sufficiency (PNS) for invariant subgraph learning, and combining domain-variant subgraphs with learned invariant subgraphs for ensemble reasoning. Initially, we outline a conventional graph generation process. We then propose that sufficient and necessary invariant subgraphs can be identified by maximizing the PNS. To address the computational challenges associated with PNS, we introduce a flexible lower bound for PNS under mild conditions. Additionally, we propose a strategy to ensemble variant and invariant features on the test set, utilizing variant features to mitigate the sparsity issue of necessary and sufficient invariant features. Our SNIGL  demonstrates superior performance compared to state-of-the-art methods. However, a key limitation of our approach we need to assume that variant and invariant features are conditionally independent in using domain variant features to address the sparsity issue of sufficient and necessary invariant features. 
Future work should focus on developing a more flexible method to overcome this limitation.
\clearpage

\bibliographystyle{ACM-Reference-Format}
\bibliography{citation}

\newpage
\appendix
\onecolumn

\section{Proof of Theorem~\ref{thm:bound}}\label{sec:proof_bound}
\begin{proof}
    To find the lower bound of PNS, by \emph{Bonferroni's inequality}, for any three events $A$, $B$, we have the bounds
\begin{equation}
    \small
         P(A, B) \geq \max (0, P(A)+P(B) - 1) 
\end{equation}
We substitute $A$ for $Y_{do(C=c)}=y$ and $B$ for $Y_{do(C\ne c)} \ne y$. Then, we have
\begin{equation}\label{equ:l1}
\small
\begin{aligned}
    &P(Y_{do(C=c)}=y, Y_{do(C\ne c)} \ne y)\\
    &\geq \max (0, P(Y_{do(C=c)}=y)+P(Y_{do(C\ne c)} \ne y) - 1) 
\end{aligned}
\end{equation}
Based on the assumption of our causal model, the condition that there is no confounding between variables $C$ and $Y$ is met. Thus, the intervention probability $P(Y_{do(C=c)}=y)$ and  $P(Y_{do(C\ne c)} \ne y)$ can be identified by conditional probabilities $P(Y_{do(C=c)}=y)$ and  $P(Y_{do(C\ne c)} \ne y)$, respectively. Then, Eq.~\ref{equ:l1} can be rewritten as 
\begin{equation}\label{equ:asfgd}
\small
\begin{aligned}
    &P(Y_{do(C=c)}=y, Y_{do(C\ne c)} \ne y)\\
    &\geq \max (0, P(Y=y|C=c)+P(Y \ne y| C \ne c) - 1)\\
    &= \max (0, P(Y=y|C=c)- P(Y = y| C \ne c))
\end{aligned} 
\end{equation}
We now prove that $P(Y_{do(C=c)}=y, Y_{do(C\ne c)} \ne y)$ is equivalent to PNS. Specifically, according to the consistency  assumption of counterfactual reasoning, i.e., 
$(C = c) \Rightarrow(Y_{do(C=c)}=y)= (Y = y)$, 
$(C \ne c) \Rightarrow(Y_{do(C\ne c)}\ne y)= (Y \ne y)$, we know that 
\begin{equation}\label{equ:l2}
\small
\begin{aligned}
    &(Y_{do(C=c)}=y) \wedge (Y_{do(C\ne c)} \ne y)\\
    =&\big((Y_{do(C=c)}=y) \wedge (Y_{do(C\ne c)} \ne y )\big) \wedge \big( (C =c) \vee (C \ne c) \big)\\
    =&\big((Y_{do(C=c)}=y) \wedge (Y_{do(C\ne c)} \ne y )  \wedge (C=c)  \big) \vee\\ &\big( (Y_{do(C=c)}=y) \wedge (Y_{do(C\ne c)} \ne y ) \vee (C \ne c) \big)\\
    =&\big((Y=y) \wedge (Y_{do(C\ne c)} \ne y )  \wedge (C=c)  \big) \vee\\ &\big( (Y_{do(C=c)}=y) \wedge (Y \ne y ) \vee (C \ne c) \big)
\end{aligned}
\end{equation}

By Eq.~\ref{equ:l2}, we have
\begin{equation}\label{equ:ddqqtr}
\small
\begin{aligned}
&P(Y_{do(C=c)}=y, Y_{do(C\ne c)} \ne y)\\
=&P(Y_{do(C=c)}=y, C \ne c, Y \ne y) + P(Y_{do(C\ne c)} \ne y, C=c, Y=y)\\
=&P(Y_{do(C=c)}=y | C \ne c, Y \ne y)P(C \ne c, Y \ne y)\\
&+ P(Y_{do(C\ne c)} \ne y| C=c, Y=y)P(C=c, Y=y),
\end{aligned}
\end{equation}
which is the definition of $\text{PNS}(C=c, Y=y)$ (Definition \ref{def:pns}).

By  Eq.~\ref{equ:asfgd} and Eq.~\ref{equ:ddqqtr}, we have 
\begin{equation}
\small
   \text{PNS}(C=c, Y=y) \geq \max (0, P(Y=y|C=c)- P(Y = y| C \ne c))
\end{equation}


Then, since conditional distribution $P(Y=y|C \ne c)$  can be written as
\begin{equation*}
\small
        \begin{aligned}
            &P(Y=y|C \ne c)\\
            &= \frac{P(Y=y) - P(C=c,Y=y)}{1 - P(C = c)} \quad(\text{Bayes' Rule})\\
            &=  \frac{P(Y=y) - P(C=c)P(Y=y|C=c)}{1 - P(C = c)} \\
            &= \frac{P(Y=y) - \sum_{G} P(G, C=c) P(Y=y|C=c)}{1 - \sum_G P(G, C = c)} \quad(\text{Law of Total Probability})\\
            &=\frac{P(Y=y) - \mathbb E_G[P(C=c|G)] P(Y=y|C=c)}{1 - \mathbb E_G [P(C = c|G)]}
        \end{aligned}
    \end{equation*}
we have
\begin{equation*}
\small
\begin{aligned}
    &P(Y = y|C =c) - P(Y = y|C \ne c)\\
    &P(Y = y|C =c)  -  \frac{P(Y=y) - \mathbb E_G[P(C=c|G)] P(Y=y|C=c)}{1 - \mathbb E_G [P(C = c|G)]}\\
    &= P(Y = y|C =c) +  P(Y=y|C=c) \cdot \frac{ \mathbb E_G[P(C=c|G)]}{1 - \mathbb E_G [P(C=c|G)]}  - \frac{ P(Y=y)}{1 - \mathbb E_G [P(C=c|G)]}\\
    &= \frac{ P(Y = y|C =c) - P(Y=y)}{1 - \mathbb E_G [P(C=c|G)]} 
\end{aligned}
\end{equation*}
\end{proof}

\section{Proof of Theorem~\ref{lma:boost}}\label{sec:proof_boost}
Before proving Theorem~\ref{lma:boost}, we prove Lemma~\ref{lma:nonzero}, which allows us to safely divide by the quantity 
\begin{lemma}\label{lma:nonzero}
    In the setting of Theorem~\ref{lma:boost}, $P(\hat{Y} =1 |Y =1) + P(\hat{Y} =0 |Y =0) = 1$ if  and only if $C$ and $Y$ are independent.
\end{lemma}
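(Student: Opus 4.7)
The plan is to substitute the expressions for $P(\hat{Y}=1|Y=1)$ and $P(\hat{Y}=0|Y=0)$ from Theorem~\ref{lma:boost} into the sum and reduce the equality condition to a statement about the variance of $P(Y=1 \mid C)$.

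First, denote $p(c) := P(Y=1|C=c)$ and $p := P(Y=1)$. Since $\mathbb{E}_C[P(Y=1|C)] = P(Y=1) = p$ and $\mathbb{E}_C[P(Y=0|C)] = 1-p$ by the law of total probability, the two calibration formulas from Theorem~\ref{lma:boost} become
\begin{equation*}
P(\hat{Y}=1|Y=1) = \frac{\mathbb{E}_C[p(C)^2]}{p}, \qquad P(\hat{Y}=0|Y=0) = \frac{\mathbb{E}_C[(1-p(C))^2]}{1-p}.
\end{equation*}
(This already requires $0 < p < 1$, which is implicit in the setting since otherwise $Y$ is constant and independence is trivial.)

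Next, I would set the sum equal to $1$, clear denominators by multiplying by $p(1-p)$, and expand $(1-p(C))^2 = 1 - 2p(C) + p(C)^2$. After using $\mathbb{E}_C[p(C)] = p$ to cancel the linear term, the equation collapses to
\begin{equation*}
\mathbb{E}_C[p(C)^2] = p^2 = \bigl(\mathbb{E}_C[p(C)]\bigr)^2,
\end{equation*}
which is precisely $\operatorname{Var}_C(p(C)) = 0$. By the standard fact that a random variable has zero variance if and only if it is almost surely constant, this is equivalent to $P(Y=1|C) = P(Y=1)$ almost surely, i.e.\ $C \perp Y$.

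The only subtle point is the direction of the equivalence: the forward direction follows from the vanishing-variance characterization, and the reverse direction is immediate, because if $C \perp Y$ then $p(C) \equiv p$ almost surely, so $\mathbb{E}_C[p(C)^2] = p^2$ and $\mathbb{E}_C[(1-p(C))^2] = (1-p)^2$, giving sum $p + (1-p) = 1$. I do not expect any real obstacle here; the main care is just bookkeeping to make sure the algebra reduces cleanly to the second-moment identity and that the edge cases $p \in \{0,1\}$ are either excluded or handled trivially.
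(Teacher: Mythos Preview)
Your proof is correct and takes a genuinely different route from the paper's.

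The paper argues probabilistically: assuming $\epsilon_0+\epsilon_1=1$ it computes $\mathbb{E}[\hat Y]$ via the law of total expectation to conclude $\hat Y\perp Y$, then expands $\operatorname{Cov}(\hat Y,Y)=0$ by conditioning on $C$ (using that $\hat Y$ and $Y$ are conditionally independent given $C$ with the same conditional mean) to obtain $\mathbb{E}\bigl[(\mathbb{E}[Y\mid C]-\mathbb{E}[Y])^2\bigr]=0$, hence $C\perp Y$. The reverse direction is handled by noting that $C\perp Y$ forces $\hat Y\perp Y$. Your approach instead plugs the explicit moment formulas $P(\hat Y=1\mid Y=1)=\mathbb{E}_C[p(C)^2]/p$ and $P(\hat Y=0\mid Y=0)=\mathbb{E}_C[(1-p(C))^2]/(1-p)$ directly into the sum, clears denominators, and reduces the equation in one line of algebra to $\operatorname{Var}_C(p(C))=0$. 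Both arguments land on the same zero-variance characterization of independence, but yours is shorter and more transparent, while the paper's avoids invoking the closed-form expressions for $\epsilon_0,\epsilon_1$ altogether.

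One small point of logical hygiene: in the paper, Lemma~\ref{lma:nonzero} is proved \emph{before} Theorem~\ref{lma:boost} and is used inside that proof (to justify dividing by $\epsilon_0+\epsilon_1-1$). Since you cite the calibration formulas ``from Theorem~\ref{lma:boost},'' you should make explicit that those two identities follow immediately from the definition $\hat Y\sim\text{Bernoulli}(P(Y{=}1\mid C))$ together with $\hat Y\perp Y\mid C$, and do not themselves rely on the lemma; otherwise the argument looks circular even though it is not.
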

\begin{proof}
Let $P(\hat{Y} =1 |Y =1) = \epsilon_1$ and $P(\hat{Y} =0 |Y =0) = \epsilon_0$.
We first prove the forward implication. Suppose $\epsilon_1 + \epsilon_0 = 1$, we have 
\begin{equation}
\small
    \begin{aligned}
\mathbb{E}[\hat{Y}] & =\epsilon_1 \operatorname{Pr}[Y=1]+\left(1-\epsilon_0\right)(1-\operatorname{Pr}[Y=1])~(\text{Law of Total Expectation}) \\
& =\left(\epsilon_0+\epsilon_1-1\right) \operatorname{Pr}[Y=1]+1-\epsilon_0 \\
& =1-\epsilon_0~(P(\hat{Y} =1 |Y =1) + P(\hat{Y} =0 |Y =0) = 1)\\
& =\mathbb{E}[\hat{Y} \mid Y=0] .
\end{aligned}
\end{equation}
Since $Y$ and $\hat{Y}$ are both binary, and the distribution of $\hat{Y}$ is specified entirely by its mean, we have $\hat{Y} \perp Y$. Further, the covariance between $\hat{Y}$ and $Y$ is 0, i.e., 
\begin{equation}
\small
    \begin{aligned}
0 & =\mathbb{E}[(Y-\mathbb{E}[Y])(\widehat{Y}-\mathbb{E}[\widehat{Y}])] \\
& =\mathbb{E}\left[\mathbb{E}\left[(Y-\mathbb{E}[Y])(\widehat{Y}-\mathbb{E}[\widehat{Y}]) \mid X_S\right]\right] \\
& =\mathbb{E}\left[\mathbb{E}\left[Y-\mathbb{E}[Y] \mid X_S\right] \mathbb{E}\left[\widehat{Y}-\mathbb{E}[\widehat{Y}] \mid X_S\right]\right] \\
& =\mathbb{E}\left[\left(\mathbb{E}\left[Y-\mathbb{E}[Y] \mid X_S\right]\right)^2\right],
\end{aligned}
\end{equation}
so we have $\mathbb E[Y |X_S] = \mathbb E[Y]$. Since $Y$ is binary, $\mathbb E[Y |X_S] = P(Y=1|X_S)$ and $\mathbb E[Y]= P(Y=1)$ and we have 
$P(Y=1|X_S) = P(Y=1)$ and $P(Y=0|X_S) = P(Y=0)$, that is, $P(Y|X_S) = P(Y)$. Thus, $Y \perp S$. 

To prove the reverse implication, suppose $Y \perp S$, we have
\begin{equation}
\small
\epsilon_1=P(\hat{Y}=1 \mid Y=1) =P(\hat{Y}=1) =1-P(\hat{Y}=0) =1-P(\hat{Y}=0 \mid Y=0) =1-\epsilon_0
\end{equation}
\end{proof}
We now use Lemma~\ref{lma:nonzero} to prove Theorem~\ref{lma:boost}:
\begin{proof}
   Given $C$, $Y$ and $\hat Y$ have the same conditional distribution, i.e., 
\begin{equation}\label{equ:yy1}
\small
    P(Y|C) =P(\hat Y|C)
\end{equation}
Next we prove $P(Y = 1) = P(\hat{Y} = 1)$. 
\begin{equation}\label{equ:yy2}
\small
\begin{aligned}
    P(\hat Y) &= \sum_{C} P(\hat Y, C)= \sum_{C} P(C)P(\hat Y | C)\\ 
    &= \mathbb E_{C}[ P(\hat Y|C) ]= \mathbb E_{C}[ P( Y|C) ] = P(Y)   
\end{aligned}
\end{equation}

Based on the above equalities, we first derive the calculation formulas for $P(\hat Y=1| Y=1)$ and $P(\hat Y=0| Y=0)$.
\begin{equation}
\small
\begin{aligned}
&P(\hat Y=1| Y=1) = \frac{P(\hat Y=1, Y=1)}{P(Y=1)}\\
&= \frac{P(\hat Y=1, Y=1)}{P(\hat{Y}=1)}~\text{(according to Eq.~\ref{equ:yy2})}\\
&= \frac{\sum_{C} P(\hat Y=1, Y=1, C)}{\sum_{C} P(\hat Y=1, C)}\\
&= \frac{\mathbb E_{C} [P(\hat Y=1, Y=1| C)]}{\mathbb E_{C} [P(\hat Y=1 | C)]}\\
&= \frac{\mathbb E_{C} [P(\hat Y=1| C) \cdot P(Y=1| C)]}{\mathbb E_{C} [P(\hat Y=1 | C)]}\\
&= \frac{\mathbb E_{C} [ P(Y=1| C)^2]}{\mathbb E_{C} [P(Y=1 | C)]}~\text{(according to Eq.~\ref{equ:yy1})}
\end{aligned}
\end{equation}
Similarly, we can obtain 
\begin{equation}
    P(\hat Y=0| Y=0) = \frac{\mathbb E_{C} [ P(Y=0| C)^2]}{\mathbb E_{C} [P(Y=0 | C)]}
\end{equation}
Next we will discuss the connection between $P(\hat Y=1|S, E)$ and $P(Y=1|S, E)$ by expanding $P(\hat Y=1|S, E)$.
\begin{equation}
\small
\begin{aligned}
    &P(\hat Y=1|S, E)  = \frac{P(\hat Y=1, S, E) }{P(S, E)}\\
    =&\frac{P(\hat Y=1, S, E,Y=1) }{P(S, E)} + \frac{P(\hat Y=1, S, E, Y=0) }{P(S, E)} ~(\text{Law of Total Probability})\\
    =&P(Y=1|S, E) \cdot P(\hat Y =1| S, E,Y=1) +P(Y=0|S, E) \cdot P(\hat Y =1|S, E,Y=0)\\
    =&P(Y=1|S, E) \cdot P(\hat Y =1| Y=1)+P(Y=0|S, E) \cdot P(\hat Y =1| Y=0)\text{($C \perp \{S,E\}|Y$, and $C \to \hat Y$)}\\
    =&P(Y=1|S,E)(P(\hat Y = 1|Y=1) + P(\hat Y=0|Y=0) - 1 ) + 1 - P(\hat Y = 0|Y = 0)
\end{aligned}
\end{equation}
Combining the conclusion of Lemma~\ref{lma:nonzero}, we can get $P(\hat Y=1|Y=1) + P(\hat Y=0|Y=0) \ne 1$, so we have
\begin{equation}
\small
    P(Y=1|S,E) = \frac{P(\hat Y = 1 |S,E) -1 +  P(\hat Y=0|Y=0)}{P(\hat Y=1|Y=1)   + P(\hat Y=0|Y=0)- 1}
\end{equation}
Now we have $P(Y|S,E)$ and $P(Y|C)$. Our next question is how to combine them to calculate $P(Y|S, C, E)$. Specifically, we first calculate the Odds of $P(Y|S, C, E)$.
\begin{equation}\label{equ:safavg}
\small
    \begin{aligned}
        &\frac{P(Y=1|S, C, E)}{P(Y=0|S, C, E)} = \frac{P(Y=1)P(S, C, E|Y=1)}{P(Y=0)P(S, C, E|Y=0)}\\
        &=\frac{P(Y=1)P(S,E |Y=1)P( C|Y=1)}{P(Y=0)P(S, E |Y=0)P( C|Y=0)}~\text{($C \perp \{S, E\}|Y$)}\\
        &= \frac{P(Y=1)P(Y=1|C)P( Y=1|S,E)}{P(Y=0)P(Y=0|C)P(Y=0| S,E)} \\
    \end{aligned}
\end{equation}
Applying the log function to each side of Eq.~\ref{equ:safavg}, we have
\begin{equation}
\begin{aligned}
\small
    \log \frac{P(Y=1|S, C,E)}{P(Y=0|S, C, E)} = &\log \frac{P(Y=1)}{P(Y=0)} + \log \frac{P(Y=1|C)}{P(Y=0|C)} + \log \frac{P(Y=1|S,E)}{P(Y=0|S,E)}
\end{aligned}
\end{equation}
The aforementioned formula can also be expressed as
\begin{equation}
\small
\begin{aligned}
    &\text{logit}(P(Y=1|S, C, E)) =\\
    &\text{logit}(P(Y=1)) + \text{logit}(P(Y=1|C)) + \text{logit}(P(Y=1|S,E))
\end{aligned}
\end{equation}
Since sigmoid function (denoted as $\sigma$) is the inverse function of logit, we have
\begin{equation}
\small
\begin{aligned}
    &P(Y=1|S, C, E) = \sigma(\text{logit}(P(Y=1|S, C, E)) )\\  
    &= \sigma(\text{logit}(P(Y=1))  + \text{logit}(P(Y=1|C)) + \text{logit}(P(Y=1|S,E)))
\end{aligned}
\end{equation}
\end{proof}

\section{Multiclass Case}\label{sec:multi_class}
In the main text of the paper, we employed a simplified notation to present our test domain adaptation method in the context of binary labels $Y$. However, in numerous instances, including our experiments in Section~\ref{sec:exp}, the label $Y$ can have more than two classes. Consequently, in this section, we illustrate the means of extending our method to the multiclass setting. 

\begin{equation}\label{equ:asfagg}
\small
\begin{aligned}
        &P(\hat{Y}=y|S,E)\\
        &= \sum_{y' \in \mathcal{Y}} P(\hat{Y}=y|S,E, Y=y')P(Y=y'|S,E)\\
        &= \sum_{y' \in \mathcal{Y}} P(\hat{Y}=y| Y=y')P(Y=y'|S,E)~\text{($C \perp \{S,E\}|Y$, and $\hat Y$ is determined by the $C$)}\\
\end{aligned}
\end{equation}
Let $M \in [0, 1]^{\mathcal{Y} \times \mathcal{Y}}$ ($\mathcal{Y} = \{1, 2,..., |\mathcal{Y}|\}$) denote a  matrix  with 
\begin{equation}
\small
    M_{ij} = P(\hat Y=i|Y=j).
\end{equation}
Let $h \in [0, 1]^\mathcal{V}$ denote a vector with 
\begin{equation}
\small
    h_i = P(Y=i|S, E).
\end{equation}
In matrix notation, Eq.~\ref{equ:asfagg} can be seen as a 
\begin{equation}
\small
    P(\hat{Y}|S,E) = M \cdot h \in [0, 1]^\mathcal{Y}
\end{equation}
When $M$ is non-singular, we can calibrate $P(\hat{Y}=y|S,E)$ using the following equality.
\begin{equation}
\small
    P(Y|S, E) = M^{-1} \cdot P(\hat{Y}|S,E)
\end{equation}

\begin{equation}
\small
\begin{aligned}
    P(\hat{Y}=y| Y=y') 
    &= \frac{P(\hat{Y}=y, Y=y')}{P(Y=y')}\\
    &= \frac{\mathbb E_{C} [P(\hat{Y}=y, Y=y'|C)]}{\mathbb E_{C} [P(Y=y'|C)]}\\
    &= \frac{\mathbb E_{C} [P(Y=y'|C) P(\hat{Y}=y|C,Y)]}{\mathbb E_{C} [P(Y=y'|C)]}\\
    &= \frac{\mathbb E_{C} [P(Y=y'|C) P(\hat{Y}=y|C)]}{\mathbb E_{C} [P(Y=y'|C)]}~\text{($\hat Y$ is determined by the $C$)}\\
\end{aligned}
\end{equation}

Similarly, we start by calculating the Odds of $P(Y=y|X_S, X_U, E)$.
\begin{equation}\label{equ:sdavagg}
\small
\begin{aligned}
    \frac{P(Y=y|C, S, E)}{P(Y \ne y|C, S, E)} 
    &= \frac{P(Y=y|C, S, E)}{\sum_{y' \ne y} P(Y = y'|C, S, E)}\\
    &= \frac{P(Y=y,C, S, E)}{\sum_{y' \ne y} P(Y = y',C, S, E)}\\
    &= \frac{P(Y=y)P(C, S, E|Y=y)}{\sum_{y' \ne y} P(Y = y')P(C, S, E|Y=y')}\\
    &=\frac{P(Y=y)P(C|Y=y)P(S, E|Y=y)}{\sum_{y' \ne y} P(Y = y')P( C|Y=y')P(S,E |Y=y')}~(C \perp \{S,E\}|Y)\\
    &= \frac{P(Y=y)P(Y=y|C)P(Y=y|S, E)}{\sum_{y' \ne y} P(Y = y')P( Y=y'|C)P(Y=y'|S, E)}\\
\end{aligned}
\end{equation}

Let 
\begin{equation}
\small
    Q_y = P(Y=y)P(Y=y|C)P(Y=y|S, E) ~\text{for each $y \in \mathcal{Y}$},  
\end{equation} 
and 
\begin{equation}
\small
    \|Q\|_1 = \sum_{y \in \mathcal{Y}} Q_y.
\end{equation}
Applying the log function to each side of Eq.~\ref{equ:sdavagg}, we have
\begin{equation}
\small
\begin{aligned}
    &\text{logit}(P(Y=y|C, S, E)) \\
    &= \log \frac{P(Y=y|C, S, E)}{P(Y \ne y|C, S, E)}\\
    &= \log \frac{Q_y}{\sum_{y' \ne y} Q_{y'}} = \log \frac{\frac{Q_y}{\|Q\|_1}}{\sum_{y' \ne y} \frac{Q_{y'}}{\|Q\|_1}} = \text{logit}\Big(\frac{Q_y}{\|Q\|_1}\Big)
\end{aligned}
\end{equation}
Note that the reason we convert $\log \frac{Q_y}{\sum_{y' \ne y} Q_{y'}}$ into $\log \frac{\frac{Q_y}{\|Q\|_1}}{\sum_{y' \ne y} \frac{Q_{y'}}{\|Q\|_1}}$ is that $Q_y + \sum_{y' \ne y} Q_{y'} \ne 1$ and cannot be converted into logit.

Since sigmoid function $\sigma$ is the inverse function of logit, we have
\begin{equation}
\small
    P(Y=y|C, S, E) = \sigma(\text{logit}(P(Y=y|C, S, E))) = \sigma\Big(\text{logit}\Big(\frac{Q_y}{\|Q\|_1}\Big)\Big) = \frac{Q_y}{\|Q\|_1}
\end{equation}

\end{document}